\newif\ifarxiv
\tikzset{
  treenode/.style = {align=center, inner sep=0pt, text centered,
    font=\sffamily},
  arn_n/.style = {treenode, circle, black, font=\sffamily\bfseries, draw=black,
    fill=white, text width=1.5em},
  arn_r/.style = {treenode, circle, black, font=\sffamily\bfseries, draw=black,
    fill=white, text width=1.0em},
  arn_x/.style = {treenode, rectangle, draw=black,
    minimum width=0.5em, minimum height=0.5em}
}
\definecolor{tabblue}{HTML}{4e79a7}
\definecolor{tabred}{HTML}{e15759}
\titlespacing{\section}{0pt}{*0.3}{*0}
\titlespacing{\subsection}{0pt}{*0.15}{*0}
\def\setstretch#1{\renewcommand{\baselinestretch}{#1}}
\newcommand{\softmax}{\mathrm{softmax}}
\newcommand{\Sim}{\mathrm{Sim}}
\newcommand{\vA}{\mathbf{A}}
\newcommand{\vB}{\mathbf{B}}
\newcommand{\vC}{\mathbf{C}}
\newcommand{\vD}{\mathbf{D}}
\newcommand{\vF}{\mathbf{F}}
\newcommand{\vI}{\mathbf{I}}
\newcommand{\vQ}{\mathbf{Q}}
\newcommand{\vK}{\mathbf{K}}
\newcommand{\vV}{\mathbf{V}}
\newcommand{\vP}{\mathbf{P}}
\newcommand{\vW}{\mathbf{W}}
\newcommand{\vO}{\mathbf{O}}
\newcommand{\vM}{\mathbf{M}}
\newtheorem{proposition}{Proposition}
\newtheorem*{remark}{Remark}
  \newcommand{\colornote}[3]{{\color{#1}\bf{#2 #3}\normalfont}}
  \newcommand{\colornote}[3]{}
\definecolor{darkred}{rgb}{0.7,0.1,0.1}
\definecolor{darkgreen}{rgb}{0.1,0.5,0.1}
\definecolor{cyan}{rgb}{0.7,0.0,0.7}
\definecolor{dblue}{rgb}{0.2,0.2,0.8}
\definecolor{maroon}{rgb}{0.76,.13,.28}
\definecolor{burntorange}{rgb}{0.81,.33,0}
\definecolor{royalpurple}{rgb}{0.47,.31,0.66}
\newcommand{\fastfft}{\textsc{FlashConv}\xspace}
\newcommand{\hthree}{\textsc{H3}\xspace}
  \newcommand{\num}[1]{{\color{red}\bf{#1}\normalfont}}
  \newcommand{\num}[1]{#1}
\title{Hungry Hungry Hippos: Towards Language Modeling with State Space Models}
  \author[$\dagger$]{Daniel Y. Fu\thanks{Equal Contribution. Order determined by coin flip.}}
  \author[$\dagger$]{Tri Dao$^*$}
  \author[$\ddagger$]{Khaled K. Saab}
  \author[$\dagger\dagger$]{Armin W. Thomas}
  \author[$\ddagger\ddagger$]{\\ Atri Rudra}
  \author[$\dagger$]{Christopher R{\'e}}
  \affil[$\dagger$]{Department of Computer Science, Stanford University}
  \affil[$\ddagger$]{Department of Electrical Engineering, Stanford University}
  \affil[$\dagger\dagger$]{Department of Psychology, Stanford University}
  \affil[$\ddagger\ddagger$]{Department of Computer Science and Engineering, University at Buffalo, SUNY\vspace{4pt}}
  \affil[ ]{{\texttt{\{danfu,tridao\}@cs.stanford.edu}, \texttt{\{ksaab,athms\}@stanford.edu}, \texttt{atri@buffalo.edu}, \texttt{chrismre@cs.stanford.edu}}}
  \date{December 28, 2022}
\author{Daniel Y. Fu\thanks{Equal Contribution. Order determined by coin flip.}~$^{~\dagger}$, Tri Dao$^{* \dagger}$, Khaled K. Saab$^{\dagger}$, Armin W. Thomas$^{\dagger}$, Atri Rudra$^{\ddagger}$, Christopher R\'{e}$^{\dagger}$ \\
$\dagger$ Stanford University, $\ddagger$ University at Buffalo, SUNY \\
\texttt{\{danfu,tridao\}@cs.stanford.edu},~\texttt{\{ksaab,athms\}@stanford.edu}, \\
\texttt{atri@buffalo.edu},~\texttt{chrismre@cs.stanford.edu}
}
\begin{document}

\maketitle


\begin{abstract}

State space models (SSMs) have demonstrated state-of-the-art sequence modeling performance in some modalities, but underperform attention in language modeling.
Moreover, despite scaling nearly linearly in sequence length instead of quadratically, SSMs are still slower than Transformers due to poor hardware utilization.
In this paper, we make progress on understanding the expressivity gap between SSMs and attention in language modeling, and on reducing the hardware barrier between SSMs and attention.
First, we use synthetic language modeling tasks to understand the gap between SSMs and attention.
We find that existing SSMs struggle with two capabilities: recalling earlier tokens in the sequence and comparing tokens across the sequence.
To understand the impact on language modeling, we propose a new SSM layer, \hthree, that is explicitly designed for these abilities.
\hthree matches attention on the synthetic languages and comes within \num{0.4} PPL of Transformers on OpenWebText.
Furthermore, a hybrid 125M-parameter \hthree-attention model that retains two attention layers
surprisingly outperforms Transformers on OpenWebText by \num{1.0} PPL.
Next, to improve the efficiency of training SSMs on modern hardware,
we propose \fastfft.
\fastfft uses a fused block FFT algorithm to improve efficiency on sequences up to 8K, and introduces a novel state passing algorithm that exploits the recurrent properties of SSMs to scale to longer sequences.
\fastfft yields 2$\times$ speedup on the long-range arena benchmark and allows hybrid language models to generate text \num{2.4$\times$} faster than Transformers.
Using \fastfft, we scale hybrid \hthree-attention language models up to 2.7B parameters on the Pile and find promising initial results, achieving lower perplexity than Transformers and outperforming Transformers in zero- and few-shot learning on a majority of tasks in the SuperGLUE benchmark.

\end{abstract}


\section{Introduction}
\label{sec:intro}

State space models (SSMs) have achieved state-of-the-art sequence modeling performance in domains ranging from time series analysis~\citep{gu2022efficiently} to audio generation~\citep{goel2022s}.
However, they have yet to match the performance of Transformers on language modeling, often underperforming Transformers by multiple points in perplexity~\citep{gu2022efficiently}.
An natural question is whether this gap in performance is due to inherent inductive biases and capabilities in attention~\citep{edelman2022inductive,olsson2022context}, or whether it is a function of the significant organizational resources that have been spent training and tuning large attention-based language models~\citep{chowdhery2022palm,hoffmann2022training,zhang2022opt}, as well as specialized hardware support for attention, ranging from tensor cores~\citep{nvidia2017nvidia} to transformer chips~\citep{nvidia2022nvidia,kao2021optimized}.

We take first steps towards answering these questions in this paper.
First, we use synthetic language modeling tasks to show that there is an expressivity gap between SSMs and attention.
Using our insights, we design a new SSM layer that nearly matches attention in language modeling.
Second, we propose better hardware-aware algorithms for SSMs that allow them to take advantage of modern accelerators---and run faster than attention.

\begin{figure}
    \centering
    \includegraphics[width=\textwidth]{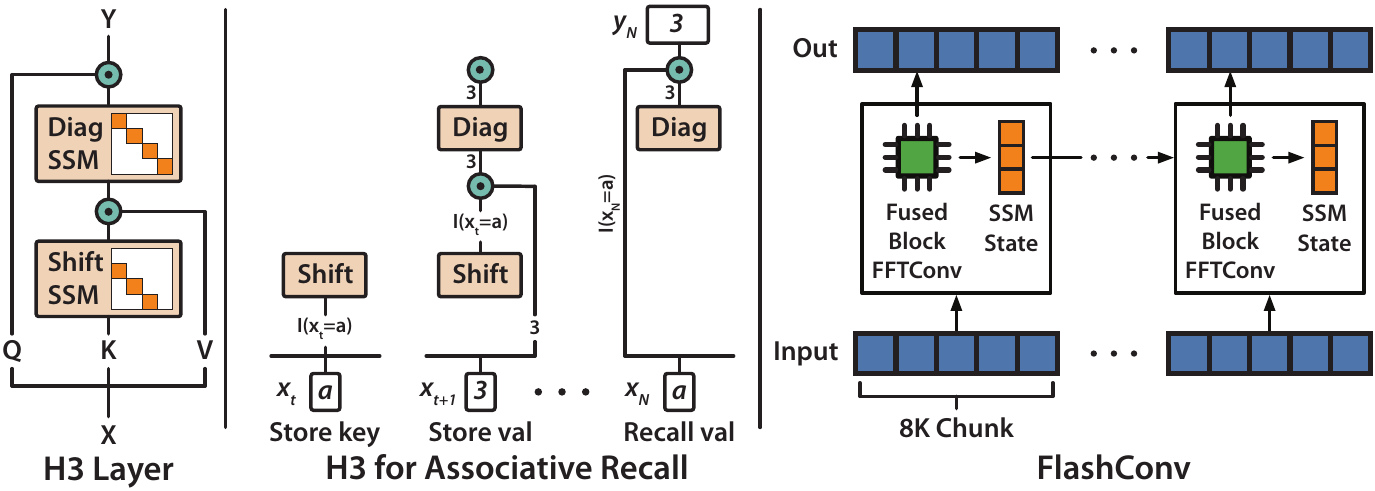}
    \caption{\label{fig:banner}
    Left: \hthree stacks two discrete SSMs with shift and diagonal matrices and uses multiplicative interactions between input projections and their outputs to model comparisons between points in a sequence.
    Middle: \hthree can perform associative recall---which is easy for attention, but not existing SSMs.
    Right: \fastfft uses a new state-passing algorithm over fused block FFTConv to increase hardware efficiency of SSMs, allowing \hthree to scale to billion-parameter models.
    }
    \vspace{-1.25em}
\end{figure}

\textbf{Understanding the Expressivity Gap.}
To understand the gap between SSMs and attention, we draw on synthetic language modeling tasks that have been proposed as a mechanistic basis for in-context learning in Transformers~\citep{olsson2022context}
These synthetic languages focus on the ability to manipulate text---recalling tokens from earlier time steps, or comparing tokens from different points in a sequence.
We find that existing SSMs struggle to model these synthetic languages.
To probe how important these skills are for language modeling, we propose \hthree (Hungry Hungry Hippo), a new SSM-based layer designed to solve these language modeling tasks.
\hthree stacks two SSMs, with multiplicative interactions between their outputs and input projections.
The SSMs allow \hthree to keep a log of tokens (to recall them later), while the multiplicative interactions allow for comparisons across the sequence.

\hthree matches attention on the synthetic languages and almost closes the gap with Transformers on language modeling---coming within \num{0.4} perplexity of Transformers on OpenWebText (compared to \num{3.4} ppl for existing SSMs---even those explicitly designed for language modeling~\citep{mehta2022long}).
Furthermore, a simple hybrid \hthree-attention model that retains two attention layers surprisingly \textit{outperforms} Transformers on OpenWebText by \num{1.0} perplexity.
To further evaluate \hthree on language modeling, we train 125M-, 355M-, 1.3B-, and 2.7B-parameter hybrid \hthree-attention language models on the Pile~\citep{gao2020pile}, using hyperparameters from GPT-3~\citep{brown2020language}.
These hybrid models outperform Transformer-based language models of the same size in perplexity, and match or outperform them on a majority of tasks in the SuperGLUE benchmark in zero- and few-shot learning.
Since the SSM layers in these hybrid models admit a recurrent view, they can also perform \num{2.4$\times$} faster inference than Transformers.

\textbf{Scaling SSMs.}
Next, we improve the efficiency of SSMs on modern hardware, to reduce the hardware barrier between attention and SSMs.
SSMs scale nearly linearly in sequence length instead of quadratically like attention, but still run slower on modern hardware due to poor hardware utilization.
To close this gap, we propose \fastfft, a hierarchical algorithm for computing SSMs, inspired by IO-Aware attention~\citep{dao2022flashattention}.
The technical challenge is that SSMs require a FFT-based convolution over the input sequence, which requires an FFT, pointwise multiply, and inverse FFT.
When implemented in cuFFT~\citep{cufft}, this operation incurs expensive GPU memory reads/writes, and cannot utilize the specialized matrix multiply units available on modern hardware\footnote{An A100 GPU has a maximum of 312 TFLOPs/s of FP16 with
tensor cores, but only 20 TFLOPs/s of FP32 (and 40 TFLOPs/s of FP16) without
tensor cores~\citep{nvidia2020nvidia}. This trend started with the V100 GPUs~\citep{nvidia2017nvidia} and has continued with the
H100 GPUs~\citep{nvidia2022nvidia}.}.
To use specialized matrix multiply units, we appeal to classical techniques that split the FFT into blocks and compute it using a series of matrix multiplications.
Combined with kernel fusion, this ``block'' FFT solution increases hardware efficiency, but only as long as the sequence length can fit into GPU SRAM (on-chip memory, analogous to L1 cache on the CPU)---up to sequence length 8K on modern A100.

To scale to sequences longer than 8K, we propose a \textit{state passing} algorithm (Figure~\ref{fig:banner} right), specialized to SSMs.
The key insight is that we can use the recurrent properties of SSMs to process the input in chunks---as long as we keep track of an additional state vector.
The state passing algorithm splits the input into the largest chunks that can fit into GPU SRAM, efficiently computes the FFT-based convolution using block FFT, and updates an intermediate state to start the next chunk.
Using this state-passing algorithm, \fastfft can scale SSMs to \textit{any} sequence length---even longer than can fit on GPU SRAM at once---while maintaining a \textit{near linear} compute complexity.
\fastfft sets state-of-the-art speed on long range arena using S4~\citep{gu2022efficiently}, outperforming Transformers by \num{5.8$\times$} and previous S4 models by \num{2$\times$}.
\fastfft trains \hthree\ \num{4-8$\times$} times faster than attention for long sequences, and is a critical component for scaling to billion-parameter models\footnote{Code for H3 is available at \url{https://github.com/HazyResearch/H3} }.


\section{Background}
\label{sec:background}

We present some background on state space models and linear attention, which inspired our H3 layer.

\subsection{State Space Models}

A continuous-time state-space representation~\citep{brogan1974modern} defines a linear mapping from an
input signal $u(t) \in \mathbb{R}$ (as a function of time $t$) to an output signal $y(t) \in \mathbb{R}$ through a state-variable
$x(t) \in \mathbb{R}^m$, with the following differential equation, for some matrices $\vA \in \mathbb{R}^{m \times m}$, $\vB \in \mathbb{R}^{m \times 1}$, $\vC \in \mathbb{R}^{1 \times m}$,
$\vD \in \mathbb{R}^{1 \times 1}$: $\dot{x}(t) = \vA x(t) + \vB u(t)$, $y(t) = \vC x(t) + \vD u(t)$.

Similarly, a discrete-time state-space representation defines a linear mapping
from a discrete input signal $u_i$ (for $i = 1, 2, \dots$) to a discrete output signal
$y_i$ though a state-variable $x_i \in \mathbb{R}^m$:
\begin{align*}
  x_i &= \vA x_{i-1} + \vB u_i \\
  y_i &= \vC x_i + \vD u_i.
\end{align*}

A state-space model (SSM) uses these representations as a layer in a deep learning
pipeline, where the matrices $\vA, \vB, \vC, \vD$ are learned from data (e.g.,
with gradient-based optimization).
One often has $d$ of these SSMs in parallel, each corresponding to one hidden
dimension.
To preserve the sequence history, HiPPO~\citep{gu2020hippo} projects the history
on a basis of orthogonal polynomials, which translates to having SSMs whose
$\vA, \vB$ matrices are initialized to some special matrices.

This recurrent form of SSMs allows efficient inference (i.e., generation): to
generate the output of the next time-step, one only needs the state of the
current time-step, not the entire input history.
Furthermore, SSMs can freely extrapolate to sequences longer than seen during training.

\textbf{SSMs as Convolution.}
For efficient training, given the
entire sequence of the input $u_1, \dots, u_N$, the output sequence
$y_1, \dots, y_N$ can also be written as the convolution of the input with the
filter~\citep{gu2021combining}:
\begin{equation*}
  f = [\vC \vB, \vC\vA\vB, \vC\vA^2\vB, \dots, \vC\vA^{N-1}\vB].
\end{equation*}
That is, from an initial condition $x_0$, we have
$y_i = \vC\vA^i\vB x_0 + (f \ast u)_i + \vD u_i$, where $(f \ast u)$ denotes a
linear convolution between $f$ and $u$.
If we set the initial condition $x_0$ to be zero, then $y$ is exactly a
linear convolution of $u$, with a residual connection $\vD u$.
More generally, any linear time-invariant system (of which SSMs are a special case) can
be written as a convolution.

Given a 1D input sequence $u \in \mathbb{R}^{N}$ of length $N$, we denote the 1D output
sequence $y \in \mathbb{R}^N$ of an SSM parameterized by matrices $\vA, \vB, \vC, \vD$ as
\begin{equation*}
  y = \mathrm{SSM}_{\vA, \vB, \vC, \vD}(u).
\end{equation*}
To simplify notation, we omit the reference to $\vA, \vB, \vC, \vD$ and write
$y = \mathrm{SSM}(u)$ if they are clear from context.
When $u$ is multidimensional of dimension $d$, we stack $d$ of these SSMs
together that defines a mapping from $u \in \mathbb{R}^{N \times d}$ to $y \in \mathbb{R}^{N \times d}$, using
the same notation $y = \mathrm{SSM}(u)$.

To construct the filter $f$ from $\vA, \vB, \vC$ efficiently, $\vA$ is often constrained to
be diagonal~\citep{gupta2022diagonal,gu2022parameterization}, or diagonal plus
low-rank~\citep{gu2022efficiently}.

\textbf{SSM through FFTs.}
Computing the convolution naively through conventional matrix operations is expensive
for long kernels, scaling as $O(N^2)$.
Instead, we can use FFTs: take the FFT of $f$ and $u$, multiply them together pointwise, and then take the inverse FFT.
This yields an $O(N \log N)$ algorithm.

\subsection{Linear attention}

We describe linear attention~\citep{katharopoulos2020transformers} and its connection to RNNs, which inspired our model design~(\cref{sec:method}).

In standard attention~\citep{vaswani2017attention}, we have $N$ query/key/value tokens $Q_i, K_i, V_i \in \mathbb{R}^d$ for
$i = 1, \dots, N$, where $N$ is the sequence length and $d$ is the head dimension.
For some similarity metric $\Sim \colon \mathbb{R}^d \times \mathbb{R}^d \to \mathbb{R}$, we want to compute the output:
\begin{equation*}
  O_i = \frac{\sum_{j=1}^i\Sim(Q_i, K_j) V_j}{\sum_{j=1}^i \Sim(Q_i, K_j)} \in \mathbb{R}^d.
\end{equation*}
For standard softmax attention, $\Sim(q, k) = e^{q^\top k}$ (often the dot
product is scaled by $1/\sqrt{d}$).
Linear attention makes the assumption that $\Sim$ has the form
$\Sim(q, k) = \phi(q)^\top \phi(k),$
for some (nonlinear) function $\phi$.
The output is then $O_i = \frac{\phi(Q_i)^\top \sum_{j=1}^{i} \phi(K_j) V_j^\top }{\phi(Q_i)^\top \sum_{j=1}^{i} \phi(K_j)}$.
Let
$S_i = \sum_{j=1}^{i} \phi(K_j) V_j^\top \in \mathbb{R}^{d \times d}$, $z_i = \sum_{j=1}^{i} \phi(K_j) \in \mathbb{R}^d$, $d_i = \phi(Q_{i})^\top z_i \in \mathbb{R}$.
Then $O_i = \frac{\phi(Q_i)^\top S_i}{d_i}$.
This connects linear attention to RNNs: the output $O_i$ is a function of $S_i$
and $z_i$, both of which are incrementally updated (as cumulative sums).



\section{Hungry Hungry Hippos Layer to Model Discrete Sequences}
\label{sec:method}

To understand the gap between SSMs and attention on language modeling, we
examine two synthetic language modeling tasks.
These tasks motivate our \hthree layer to add a discrete SSM (based on shift matrix) and multiplicative interaction to effectively model discrete sequences.
We then show that the \hthree layer is expressive enough to solve these synthetic tasks, and that this understanding leads to better performance on a real language modeling benchmark.

\subsection{Motivation: Synthetic Language Modeling Tasks\label{sec:synthetics}}

We describe two closely-related synthetic tasks, summarized in Table~\ref{table:synthetic_tasks}. 
Olsson et al.~\citep{olsson2022context} argue that the ability to solve (variants of) these tasks accounts for the majority of the in-context learning capability of Transformers, and more intuition is given in Appendix~\ref{sec:app_exp_details}. 
\begin{table}[h]
    \small
    \centering
    \caption{\label{table:synthetic_tasks} Synthetic language modeling tasks.}
    {
        \begin{tabular}{@{}|l|l|l|c|c|@{}}
        \hline
        Task & Input & Output & Sequence Length & Vocab Size  \\ 
        \hline
        Induction Head & \textit{a b c d e $\vdash$ f g h i $\dots$ x y z $\vdash$} & \textit{f} & 30 & 20 \\
        Associative Recall & \textit{a 2 c 4 b 3 d 1 a} & \textit{2} & 20 & 10 \\ \hline
        \end{tabular}
    }
\end{table}

The \textbf{Induction Head} task tests how well a model can recall content after a special token (e.g., $\vdash$ in~\cref{table:synthetic_tasks}).
At the end of the sequence, the model must recall the token that appeared immediately after the special token earlier in the sequence. \textbf{Associative Recall}~\citep{ba2016using} is similar to the induction head task, but requires the model to remember multiple key-value pairs.
At the end of the sequence, the model must recall a specific value belonging to a specific key.
\begin{table}[h]
    \small
    \centering
    \caption{\label{table:synthetics} Evaluation of 2-layer models on synthetic language tasks.}
    {
        \begin{tabular}{@{}|c|c|ccc|c|@{}}
        \hline
        Task & Random & S4D & Gated State Spaces & H3 & Attention  \\ 
        \hline
        Induction Head & 5.0 & 35.6 & 6.8 & \textbf{100.0} & \textbf{100.0} \\
        Associative Recall & 25.0 & 86.0 & 78.0 & 99.8 & \textbf{100.0}  \\ \hline
        \end{tabular}
    }
\end{table}

Table~\ref{table:synthetics} (for two-layer models) shows that S4D~\citep{gu2022parameterization} and Gated State Spaces~\citep{mehta2022long} both fail to model these synthetic languages, which suggests they may not have the expressivity for general language.
We argue that these failures suggest two missing capabilities: (i) to remember tokens that appear after a particular event (e.g., the special token in the induction head task), and (ii) to compare tokens across the sequence (e.g., comparing keys to decide which value to recall).
Attention has both these capabilities: it can compare tokens by constructing the \textit{quadratic} attention matrix $\vQ \vK^\top$, and it can recall tokens by direct copying (multiplying $\softmax(\vQ \vK^\top)$ with $\vV$).
In Section~\ref{sec:method_h3}, we design our new layer \hthree to enable these capabilities in SSMs, narrowing the expressivity gap between SSMs and attention.

\subsection{\hthree Layer\label{sec:method_h3}}
\hthree uses SSMs with shift and diagonal matrices, along with multiplicative
operations against projections of the input to capture the missing capabilities identified by the synthetics.

\textbf{High-level Intuition.}
(i) To remember tokens from the past, we want the state $x_i$ to copy from the input $u_i$, and then pass that information to the next state $x_{i+1}$. As $x_{i+1}$ relates to $x_i$by $\vA x_i$, we use a discrete SSM with a shift matrix $\vA$ (described formally below) that shifts the elements of a state vector (e.g., mapping $[a, b, c] \to [0, a, b]$).
(ii) To compare tokens across the sequence, we use multiplicative interaction: the output of an SSM, containing information from previous time steps, is multiplied with the input at the current time steps, thus measuring similarity between tokens.

\hthree is loosely inspired by linear attention (\cref{sec:background}): we project the input $u$ to get three signals $\vQ, \vK, \vV$.
Then we replace the non-linearity $\phi(\vK)$ with an SSM where $\vA$ is a shift matrix ($\mathrm{SSM}_\mathrm{shift}$), and we replace the summation $S_i$ with a SSM with diagonal $\vA$ ($\mathrm{SSM}_\mathrm{diag}$).
The output, for the case of head dimension $d_h = 1$, is:
\begin{equation*}
  \vQ \odot \mathrm{SSM}_\mathrm{diag}(\mathrm{SSM}_\mathrm{shift}(\vK) \odot \vV),
\end{equation*}
where $\odot$ denotes pointwise multiplication.
We can view this form as stacking two SSMs with multiplicative
interaction (each is a ``hungry hippo'', hence the name of our layer).
A more formal connection between linear attention, time-varying systems, and \hthree can be found in Appendix~\ref{app:linear_attention}.

\textbf{Remembering Key Tokens: Shift and Diagonal SSMs.}
The shift and diagonal SSMs are designed to address the capability to log tokens after particular events.
In the shift SSM, we constrain $\vA \in \mathbb{R}^{m \times m}$ to be a shift matrix
$
\vA_{i, j} = 
    \begin{cases}
    1 & \text{for } i - 1 = j\\
    0 & \text{otherwise}
    \end{cases}
$.
The action of this matrix on the hidden state $x_i$ is to shift each coordinate
down by one---thereby creating a ``memory'' of the previous states.
For example, if $\vB = e_1$, the first basis vector, then
$x_i = [u_i, u_{i-1}, \dots, u_{i-m+1}]$ contains the inputs from the previous $m$
time steps.
We learn $\vB$ and $\vC$ ($\vB$ can also be fixed to $e_1$ for
simplicity, in which case the output is a 1D conv.\ with kernel size $m$).

The diagonal SSM constrains $\vA$ to be diagonal and initializes it from the diagonal version of HiPPO (S4D~\citep{gu2022parameterization}).
This parameterization allows the model to remember state over the entire sequence.
The shift SSM can detect when a particular event occurs, and the diagonal SSM can remember a token afterwards for the rest of the sequence.

\textbf{Multiplicative Interaction for Comparison.}
We take the multiplicative interactions from linear attention, but they provide another missing capability when combined with a shift matrix: comparing tokens across the sequence.
The multiplicative interactions between the output of the shift SSM and the $\vV$ projection mimics local multiplicative interactions in linear attention (depending on the size of the hidden state).
Similarly, multiplicative interactions with the $\vQ$ projection and the output of the diagonal SSM allows comparisons between tokens over the entire sequence.

\textbf{\hthree Layer.} The overall layer is given in Algorithm~\ref{alg:h3} and shown schematically in Figure~\ref{fig:banner} (left).
We use the \hthree layer to construct a model in the same style as Transformers by interleaving it with MLPs,
connected by residual connection and layer norm (i.e., pre-norm architecture~\citep{baevski2018adaptive}).
We will also consider a hybrid \hthree-attention model (two attention layers while the rest are \hthree, \cref{sec:expressivity,sec:evaluation}).
\begin{algorithm}[H]
  \algsetup{linenosize=\tiny}
  \caption{\label{alg:h3} H3 Layer}
  \small
  \begin{algorithmic}[1]
    \REQUIRE Input sequence $u \in \mathbb{R}^{N \times d}$ from the previous layer, weight
    matrices $\vW_Q, \vW_K, \vW_V, \vW_O \in \mathbb{R}^{d \times d}$, a shift SSM $\mathrm{SSM}_\mathrm{shift}$, a diagonal SSM $\mathrm{SSM}_{\mathrm{diag}}$, head dimension $d_h$.
    \STATE Compute $\vQ = u \vW_Q, \vK = u \vW_K, \vV = u \vW_V \in \mathbb{R}^{N \times d}$.
    \STATE Pass $\vK$ through the shift SSM: $\overline{\vK} = \mathrm{SSM}_\mathrm{shift}(\vK) \in \mathbb{R}^{N \times d}$.
    \STATE Split $\vQ, \overline{\vK}, \vV$ into $H$ ``heads'' ($\vQ^{(h)}, {\overline{\vK}}^{(h)}, \vV^{(h)}$
    for $h = 1, \dots, H$), each a sequence of $N$ vectors of size $d_{h} = d / H$.
    \FOR{$1 \leq h \leq H$}
    \STATE Take the batched outer product ${\overline{\vK}}^{(h)} (\vV^{(h)})^\top \in \mathbb{R}^{N \times d_h \times d_h}$ (batched in the $N$-dimension) and pass it through a diagonal SSM: $\vK\vV^{(h)} = \mathrm{SSM}_{\mathrm{diag}}({\overline{\vK}}^{(h)} (\vV^{(h)})^\top) \in \mathbb{R}^{N \times d_{h} \times {d_{h}}}$.
    \STATE Batch-multiply by $\vQ$:  $\vO^{(h)} = [\vQ^{(h)}_1 \vK\vV^{(h)}_1, \dots, \vQ^{(h)}_N \vK\vV^{(h)}_N]\in  \mathbb{R}^{N \times d_h}$ (batched in the $N$-dimension).
    \ENDFOR
    \STATE Concatenate the output $\vO^{(h)}$ of each head, and multiply by the output
    projection matrix $\vW_O \in \mathbb{R}^{d \times d}$.
  \end{algorithmic}
\end{algorithm}

\paragraph{Efficiency}
We show that \hthree scales as $O(N \log N)$ with
sequence length $N$---asymptotically more efficient than attention, which typically requires $O(N^2d)$ time and $O(N^2)$ space\footnote{There are several memory-efficient
algorithms for attention~\citep{rabe2021self,dao2022flashattention}, though
their time complexity is still quadratic in $N$, which is a lower-bound for attention~\citep{keles2022computational}.} (proof in~\cref{sec:app_h3_complexity}).
\begin{proposition}\label{thm:h3_complexity}
  Let $N$ be the sequence length, $d$ be the hidden dimension, and assume that
  the head dimension $d_h$ is of order $O(1)$.
  Then the H3 layer takes $O(d^2N + d N \log N)$ time and $O(dN)$ space to compute.
\end{proposition}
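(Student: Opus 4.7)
The plan is to walk through Algorithm~\ref{alg:h3} line by line, bound the time and space of each step, and sum. Throughout I will rely on two facts already established in the excerpt: (i) multiplying an $N \times d$ matrix by a $d \times d$ weight matrix costs $O(Nd^2)$ time and $O(Nd)$ space; and (ii) applying an SSM to a single channel of length $N$ costs $O(N \log N)$ time via the FFT-based convolution described in Section~\ref{sec:background}.

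First I would handle the dense projections: the three input projections in Step 1 and the output projection in Step 8 each cost $O(Nd^2)$ time and produce $O(Nd)$-size tensors, contributing the $d^2N$ term of the target bound. Next I would bound the shift SSM in Step 2: applied independently to each of the $d$ channels of $\vK$, it takes $O(dN\log N)$ time via FFT. Then I would analyze the per-head work in the loop (Steps 4--7). The assumption $d_h = O(1)$ is doing real work here, and it is the main place where care is needed: with $H = d/d_h = \Theta(d)$ heads, the outer product $\overline{\vK}^{(h)}(\vV^{(h)})^\top$ has shape $N \times d_h \times d_h$ and costs $O(N d_h^2) = O(N)$ per head; the diagonal SSM then acts on $d_h^2 = O(1)$ independent scalar channels of length $N$, each costing $O(N \log N)$; and the batched matrix-vector product with $\vQ^{(h)}$ in Step~6 costs $O(N d_h^2) = O(N)$. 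Summing over the $\Theta(d)$ heads gives $O(dN\log N)$ for the loop.

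Adding the contributions yields $O(d^2 N) + O(dN\log N) + O(dN\log N) = O(d^2 N + dN \log N)$ time. For space, the key observation is that every intermediate tensor I have named has size $O(Nd)$: the $\vQ, \vK, \vV, \overline{\vK}$ activations are $N \times d$, and the aggregated outer-product tensor across all heads has shape $N \times H \times d_h \times d_h$, which equals $N \times d \times d_h$ and is $O(Nd)$ under $d_h = O(1)$. The FFT buffers used inside each SSM invocation are of the same order. Hence the total space is $O(Nd)$, completing the bound. The only nontrivial step is the bookkeeping in Step~5, where it is important to note that the cost of the diagonal SSM does \emph{not} scale with $d_h^2 \cdot H = O(d)$ per channel but rather with the number of scalar channels, which is $H \cdot d_h^2 = O(d)$ in total; the rest is a routine sum.
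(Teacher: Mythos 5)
Your proposal is correct and follows essentially the same route as the paper's proof: dense projections give the $O(d^2N)$ term, the shift SSM gives $O(dN\log N)$, and the diagonal SSM over $H = d/d_h$ heads with $d_h^2$ scalar channels each gives $O(Hd_h^2N\log N) = O(dN\log N)$ under $d_h = O(1)$, with all intermediates of size $O(dN)$. Your accounting of the outer products and batched multiplies in Steps 5--6 is slightly more explicit than the paper's, but these are lower-order terms and the argument is the same.
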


\subsection{Expressivity}
\label{sec:expressivity}

We show that \hthree can model our synthetic languages, as well as natural language on OpenWebText~\citep{Gokaslan2019OpenWeb}.
We also present a hybrid \hthree-attention extension that outperforms Transformers on OpenWebText.

\textbf{Mechanism for Solving Associative Recall with H3.}
\hthree is expressive enough to solve our synthetic language modeling tasks, as shown in Table~\ref{table:synthetics}.
Figure~\ref{fig:banner} (middle) shows a mechanism for a single \hthree layer to solve the associative recall task for a particular key-value pair $(a, 3)$.
The shift SSM and following multiplicative interaction act as a gate on whether to let a value through to the diagonal SSM, based on whether the previous token was key $a$.
The diagonal SSM stores the value $3$ in memory, and continually outputs it.
The final multiplicative interaction gates whether to let the diagonal SSM's output through---based on whether the current input token is the key $a$.
We formally construct the weights of an \hthree layer to solve this task in Appendix~\ref{sec:app_expressivity}.

\begin{table}[h]
    \small
    \centering
    \caption{\label{table:ablations} Perplexity of SSM variants compared to
      Transformers on OpenWebText. All models have 12 layers, with size around 125M, and are trained
      with the same hyperpameters, for 50B tokens.}
    {
        \begin{tabular}{@{}|ccccc|c|@{}}
            \hline
        \hthree & \hthree Hybrid (2 Attn) & S4D & GSS & GSS Hybrid (2 Attn) & Transformer  \\ 
        \hline
        21.0 & \textbf{19.6} & 24.9 & 24.0 & 19.8 & 20.6 \\ \hline
        \end{tabular}
    }
\end{table}

\textbf{Better Synthetic Language Modeling Translates to Better Natural Language Modeling.}
We validate that when H3 can solve these synthetic tasks, it also improves the modeling capability on natural language (e.g., on the OpenWebText dataset).
As shown in Table~\ref{table:ablations}, \hthree comes within 0.4 perplexity points of Transformers when trained for 50B tokens on OpenWebText, and performs much better than existing SSM variants (S4D, GSS), by $3-3.9$ points.

\textbf{Extension: H3-attention Hybrid Model.}
A simple hybrid \hthree-attention language model surprisingly outperforms Transformers on OpenWebText by 1.0 point.
Our hybrid model simply retains two self-attention layers: one in the second layer, and one in the middle (layer $2 + N / 2$ for an $N$-layer model, $N$ even).
The \hthree-attention hybrid also outperforms the GSS-attention hybrid~\citep{mehta2022long}.

\section{\fastfft: Efficiently Training SSMs\label{sec:efficiency}}

To improve the efficiency of SSMs on modern hardware, we propose \fastfft.
\fastfft fuses the FFT, pointwise multiply, and inverse FFT to reduce memory
reads/writes.
It also uses a block FFT algorithm to make use of specialized matrix multiply units (e.g., tensor cores on A100) for sequence lengths up to 8K.
For sequences longer than 8K, the computation no longer fits in GPU SRAM\footnote{SRAM, or on-chip memory, is much faster than off-chip GPU memory, but usually much smaller, on the order of around 100KB for each streaming processor.}, so we
propose a novel state-passing algorithm that splits the sequence into chunks to
compute the FFT convolution one chunk at a time.
\fastfft can speed up any SSMs (not just \hthree).

\subsection{Fused Block FFTConv}
\label{sec:systolic_fft}

We deploy two techniques to speed up the FFT-based convolution for sequences shorter than 8K: kernel fusion and block FFT.
Kernel fusion addresses IO bottlenecks due to reading and writing of intermediate results, while block FFT allows the FFT-based convolution to utilize specialized matrix multiplication units.
These techniques allow us to speed up FFTConv
by 2$\times$ (\cref{sec:eval_efficiency}) for sequences shorter than 8k.

\textbf{Kernel Fusion.}
Naive implementations of FFTConv using standard libraries such as cuFFT are IO-bound due to repeated reading and writing of intermediate results.
The FFT convolution in an SSM with input $u$ and filter $f$ has the form $iFFT(FFT(u) \odot FFT(f))$ (where $\odot$ denotes pointwise multiplication).
It requires reading and writing intermediate results to GPU memory---which can dominate the runtime.
Following \textsc{FlashAttention}~\citep{dao2022flashattention}, we first fuse the entire FFTConv into a single kernel and compute it in SRAM to avoid this overhead.

\textbf{Block FFT.}
To further speed up the computation of FFT-based convolution, we exploit specialized matrix multiplication hardware on modern GPUs
(e.g., Tensor Cores on Nvidia GPUs perform fast $16 \times 16$ matrix multiplication).
We appeal to classical results that show that the FFT can be written as a series of block-diagonal matrix
multiplications interleaved with permutation.
We note that such algorithms are not new, but our setting (fused FFTConv on GPU) introduces new bottlenecks---by removing the IO bottlenecks, compute becomes the bottleneck (note that a single FFT on GPU is usually IO bound).

Suppose that we want to perform an $N$-point FFT, which is equivalent to
multiply by the DFT matrix $\vF_N$.
Suppose that $N = N_1 N_2$ for some integers $N_1, N_2$.
By the Cooley-Tukey decomposition of the DFT~\citep{cooley1965an,
  bailey1990ffts} (also known as the four-step FFT algorithm),
we can write $\vF_N = \vP (\vI_{N_2} \otimes \vF_{N_1}) \vP^\top \vD (\vI_{N_1} \otimes \vF_{N_2}) \vP$,
where $\vP$ denotes a fixed permutation that reshapes the input as a $N_1 \times N_2$
array and then transpose it, $\otimes$ denotes Kroneker product, $\vD$ is a $N \times N$
diagonal matrix (called the twiddle factors)~\citep{dao2022monarch}, and $\vI_{N_i}$ and $\vF_{N_i}$ are the identity and DFT matrix of size $N_i \times N_i$.
As $\vI_{N_2} \otimes \vF_{N_1}$ and $\vI_{N_1} \otimes \vF_{N_2}$ are just block-diagonal matrices,
we can make use of specialized matmul units to perform these multiplications.
Similarly, if $N = N_1 N_2 N_3$ then we can decompose the $N$-point FFT into a
series of (block) FFT of size $N_1$, $N_2$, and $N_3$, interleaved by
permutation.

The block FFT algorithm incurs $O(N r \log N / \log r)$ FLOPs for a sequence length $N$, if $N$ can be written as $r^p$ for two integers $r, p$.
This incurs more FLOPs than standard FFT $(O(N\log N))$, but can run faster when
we using specialized matrix multiplication hardware.

\subsection{State-Passing}
However, the fused kernel cannot run if the sequence is too long to fit into GPU SRAM (longer than 8K on A100).
We show how to exploit the particular form of the FFT in SSM to speed it up for long
sequences.

The recurrent nature of SSMs allows us to split the FFTConv of a length-$N$ sequence into chunks of size $N'$ each ($N'$ is the longest FFT we can fit into SRAM), assuming $N$ is a multiple of $N'$).
We use FFTConv to compute each chunk and use a recurrence to connect the chunks.
In particular, we split the inputs $u$ into $C = N/N'$ chunks $u^{(c)} \in \mathbb{R}^{N'}$ for $c=1, \dots, C$.
Similarly, split the states $x$ into $x^{(c)} \in \mathbb{R}^{N' \times m}$ and the output $y$ into $y^{(c)} \in \mathbb{R}^{N'}$ for $i = 1, \dots, C$.
We will only need the end-state $x_{N'}^{(c)}$ of each chunk $c$.

Let $f = [\vC \vB, \vC\vA\vB, \vC\vA^2\vB, \dots, \vC\vA^{N'-1}\vB]$ be the SSM filter.
Recall from \cref{sec:background} that for each chunk $c$, $y_i^{(c)} = \vC\vA^i\vB x_{N'}^{(c-1)} + (f \ast u^{(c)})_i + \vD u_i^{(c)}$, since $x_{N'}^{(c-1)}$, the end-state of the previous chunk $(c-1)$ is the initial condition for the current chunk $c$.
In vector notation, $y^{(c)} = \vM_{xy} x_{N'}^{(c-1)} + f \ast u^{(c)} + \vD u^{(c)}$ for some matrix $\vM_{xy} \in \mathbb{R}^{N' \times m}$.
Additionally we need to update the end-state of each chunk with $x_{N'}^{c} = \vA^{N'} x_{N'}^{(c-1)} + \vM_{ux} u^{(c)}$ for some matrix $\vM_{ux}^{m \times N'}$ (derivation in Appendix~\ref{sec:state-passing-matrices}).
In essence, we can compute the output for each chunk with FFT-based convolution as long as we remember the end-state of the previous chunk, and the end-state of each chunk can be updated recurrently.
This yields a state-passing algorithm for long sequences, where we only compute FFT of length $N'$, and update some hidden state each iteration.

Let \textsc{BlockFFTConv} refer to our fused block FFTConv kernel.
Then, the state-passing algorithm for 1D input is given by Algorithm~\ref{alg:statepassing}.
For inputs of dimension $d$ where we stack $d$ SSMs, we simply batch~\cref{alg:statepassing} along the $d$-dimension.
\begin{algorithm}[h]
  \algsetup{linenosize=\tiny}
  \caption{\label{alg:statepassing} State Passing Algorithm}
  \small
  \begin{algorithmic}[1]
    \REQUIRE Input $u \in \mathbb{R}^{N}$, SSM parameterized by matrices $\vA \in \mathbb{R}^{m \times m}$, $\vB \in \mathbb{R}^{m \times 1}$, $\vC \in \mathbb{R}^{1 \times m}$, $\vD \in \mathbb{R}^{1 \times 1}$, chunk size $N'$ where $N$ is a multiple of $N'$.
    \STATE Precompute $\vA^{N'} \in \mathbb{R}^{m \times m}$, $\vM_{ux} = [\vA^{N'-1}\vB, \dots, \vB] \in \mathbb{R}^{m \times N'}$, $\vM_{xy} = [\vC, \vC\vA, \dots, \vC\vA^{N'-1}] \in \mathbb{R}^{N' \times m}$.
    \STATE Split the inputs $u_{1:N}$ into $C = N/N'$ chunks $u_{1:N'}^{(c)}$ for $c=1, \dots, C$.
    \STATE Let the initial state be $x_{N'}^{(0)} = 0 \in \mathbb{R}^m$.
    \FOR{$1 \leq c \leq C$}
      \STATE Compute $y^{(c)} = \vM_{xy} x_{N'}^{(c-1)} + $ \textsc{BlockFFTConv}($f$, $u_j$) $+ \vD u^{(c)} \in \mathbb{R}^{N'}$.
      \STATE Update state: $x_{N'}^{(c)} = \vA^{N'} x_{N'}^{(c-1)} + \vM_{ux} u^{(c)}$.
    \ENDFOR
    \STATE Return $y = [y^{(1)} \dots y^{(C)}]$.
  \end{algorithmic}
\end{algorithm}

We prove that~\cref{alg:statepassing} yields the same output as if one has computed the SSM using a large FFT of size $N$ (proof in~\cref{sec:app_statepassing_correctness_proof}):
\begin{proposition}\label{thm:statepassing_correctness}
  For input $u \in \mathbb{R}^N$ and matrices $\vA, \vB, \vC, \vD$, the output $y \in \mathbb{R}^N$ returned by~\cref{alg:statepassing} is the same as the output defined by the SSM parameterized by $\vA, \vB, \vC, \vD$.
\end{proposition}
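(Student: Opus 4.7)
My plan is a straightforward induction on the chunk index $c \in \{1,\dots,C\}$, comparing the quantities produced by Algorithm~\ref{alg:statepassing} to the ground-truth SSM trajectory. Let $\tilde x_i, \tilde y_i$ denote the state and output one obtains by running the unrolled recurrence $\tilde x_i = \vA \tilde x_{i-1} + \vB u_i$, $\tilde y_i = \vC \tilde x_i + \vD u_i$ from $\tilde x_0 = 0$. The inductive hypothesis I will maintain is that at the top of iteration $c$, the state variable held by the algorithm satisfies $x_{N'}^{(c-1)} = \tilde x_{(c-1)N'}$. The base case is immediate from the initialization $x_{N'}^{(0)} = 0 = \tilde x_0$.

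For the inductive step I first unroll the recurrence inside a single chunk. Writing $s \defeq (c-1)N'$ for the time index at which the chunk begins, iterating the state equation $i$ times gives
\begin{equation*}
  \tilde x_{s+i} \;=\; \vA^{i}\tilde x_{s} \;+\; \sum_{j=1}^{i}\vA^{\,i-j}\vB\, u_j^{(c)},
  \qquad
  \tilde y_{s+i} \;=\; \vC\vA^{i}\tilde x_{s} \;+\; \sum_{j=1}^{i}\vC\vA^{\,i-j}\vB\, u_j^{(c)} \;+\; \vD u_i^{(c)}.
\end{equation*}
I will then identify each piece with a term computed by the algorithm. The middle sum is exactly the causal convolution $(f \ast u^{(c)})_i$ for the filter $f_k = \vC\vA^{k}\vB$ defined in Section~\ref{sec:background}, which \textsc{BlockFFTConv} is assumed to compute correctly. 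The homogeneous piece $\vC\vA^{i}\tilde x_s$ is, up to the off-by-one indexing convention used in the paper, the $i$th entry of $\vM_{xy} \tilde x_s$. Taking $i = N'$ in the first display and recognizing $\vM_{ux}u^{(c)} = \sum_{j=1}^{N'}\vA^{N'-j}\vB u_j^{(c)}$ gives the end-state update $\tilde x_{s+N'} = \vA^{N'}\tilde x_s + \vM_{ux}u^{(c)}$, matching line~6 of the algorithm.

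Combining these identifications with the inductive hypothesis $x_{N'}^{(c-1)} = \tilde x_s$, the output vector $y^{(c)}$ produced on line~5 agrees entrywise with $\tilde y_{s+1:\,s+N'}$, and the updated state $x_{N'}^{(c)}$ equals $\tilde x_{s+N'}$, closing the induction. Concatenating the chunk outputs then yields the claimed identity $y = \tilde y$.

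\textbf{Main obstacle.} There is no deep difficulty here; the correctness is really a bookkeeping exercise in unrolling a linear recurrence. The one subtle point to get right is the indexing convention for $\vM_{xy}$ (its $i$th row should be the coefficient of $\tilde x_s$ in $\tilde y_{s+i}$) and for $\vM_{ux}$ (the reversed order of powers of $\vA$ so that $\vM_{ux}u^{(c)}$ aggregates the inputs with the correct delays); I would state these row-by-row rather than by an ambiguous bracket notation, and defer the explicit computation of $\vM_{ux},\vM_{xy}$ to Appendix~\ref{sec:state-passing-matrices}. The only nontrivially inherited fact is the correctness of \textsc{BlockFFTConv}, which follows from the Cooley--Tukey decomposition recorded in Section~\ref{sec:systolic_fft}.
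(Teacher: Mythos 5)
Your proposal is correct and follows essentially the same route as the paper's own proof: induction on the chunk index, unrolling the linear recurrence within a chunk to split the output into the homogeneous term (matched to $\vM_{xy}$), the causal convolution (delegated to the assumed-correct \textsc{BlockFFTConv}), and the feedthrough term, with the end-state update matched to $\vA^{N'}$ and $\vM_{ux}$. Your explicit attention to the row-indexing conventions of $\vM_{xy}$ and $\vM_{ux}$ is if anything slightly more careful than the paper's write-up, which glosses over the same bookkeeping.
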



\section{\hthree Evaluation}
\label{sec:evaluation}

\begin{table}[t]
    \small
    \centering
    \caption{\label{table:gpt} Perplexity (lower is better) of models on the Pile, OpenWebText and
      WikiText-103. GPT-Neo and hybrid \hthree are trained on the Pile, while GPT2 is
      trained on WebText. All models use the same GPT2 tokenizer. We report the
      perplexity of GPT-2 models on the Pile ($^*$) for context, though the performance is not directly comparable since they were trained on different data.}
    {
      \begin{tabular}{@{}|c|c|cc|@{}}
      \hline
        Model & Pile & OpenWebText & WikiText103 \\ 
        \hline
        GPT-2 small (125M) & 19.0* & 22.6 & 29.9 \\
        GPT-Neo-125M & 9.4 & 22.6 & 26.3 \\
        \textbf{Hybrid H3-125M} & \textbf{8.8} & \textbf{20.9} & \textbf{23.7} \\ \hline 
        GPT-2 medium (355M) & 13.9* & 17.0 & 21.8 \\ 
        \textbf{Hybrid H3-355M} & \textbf{7.1} & \textbf{15.9} & \textbf{16.9} \\ \hline
        GPT-2 XL (1.5B) & 12.4* & 12.9 & 17.0 \\
        GPT-Neo-1.3B & 6.2 & 13.1 & 13.3 \\
        \textbf{Hybrid H3-1.3B} & \textbf{6.0} & \textbf{12.4} & \textbf{12.5} \\
        \hline
        GPT-Neo-2.7B & 5.7 & 11.7 & 11.5 \\
        \textbf{Hybrid H3-2.7B} & \textbf{5.4} & \textbf{11.0} & \textbf{10.6} \\
        \hline
      \end{tabular}
    }
  \end{table}

\begin{table}[t]
    \scriptsize
    \centering
    \caption{\label{table:superglue_zeroshot_logit} Zero-shot acc.\ on SuperGLUE with logit scoring. Best results in bold, second best underline. }
    {
        \begin{tabular}{@{}|c|cccccccc|c|@{}}
            \hline
        Model & WSC & WIC & RTE & CB & MultiRC & ReCoRD & BoolQ & COPA & Average \\ 
        \hline
        OPT-125M & \textbf{39.4} & \underline{52.0} & 48.7 & 37.4 & \underline{58.9} & \underline{44.9} & \underline{59.6} & \underline{60.0} & 50.1 \\
        GPT-Neo-125M & \underline{36.5} & \textbf{53.6} & \underline{53.1} & \underline{41.1} & \textbf{59.9} & 39.6 & \textbf{62.2} & \underline{60.0} & \underline{50.8} \\
        \textbf{Hybrid \hthree-125M} & \textbf{39.4} & 51.4 & \textbf{59.2} & \textbf{48.2} & 51.4 & \textbf{55.0} & \underline{59.6} & \textbf{67.0} & \textbf{53.9} \\ \hline 
        GPT-2 medium (355M) & \underline{50.0} & \textbf{52.0} & 51.3 & 28.6 & \textbf{59.5} & \underline{53.3} & \underline{61.0} & \underline{65.0} & 52.6 \\
        OPT-350M & \textbf{53.5} & 50.8 & \underline{53.4} & \underline{35.7} & \underline{58.9} & 51.4 & 60.9 & 60.0 & \underline{53.1} \\
        \textbf{Hybrid \hthree-355M} & 37.5 & \underline{51.7} & \textbf{55.2} & \textbf{41.1} & \textbf{59.5} & \textbf{62.3} & \textbf{61.5} & \textbf{69.0} & \textbf{54.7} \\ \hline
        OPT-1.3B & 36.5 & 49.5 & \textbf{53.4} & \textbf{39.3} & \textbf{58.3} & \underline{61.8} & 55.0 & \underline{69.0} & \underline{52.9} \\
        GPT-Neo-1.3B & \underline{41.3} & \underline{50.0} & \underline{52.3} & \underline{33.9} & 57.9 & 55.5 & \underline{59.9} & 66.0 & 52.1 \\
        \textbf{Hybrid \hthree-1.3B} & \textbf{52.9} & \textbf{50.3} & \textbf{53.4} & \underline{33.9} & \underline{58.2} & \textbf{67.8} & \textbf{61.7} & \textbf{74.0} & \textbf{56.5} \\ \hline
        OPT-2.7B & \textbf{51.0} & \underline{50.8} & 50.5 & \underline{41.1} & 57.4 & \underline{65.9} & 60.9 & 66.0 & \underline{55.5} \\
        GPT-Neo-2.7B & \underline{37.5} & 50.0 & \underline{52.3} & \textbf{50.0} & \textbf{59.1} & 60.0 & \textbf{61.1} & \underline{67.0} & 54.6 \\
        \textbf{Hybrid \hthree-2.7B} & 36.5 & \textbf{51.3} & \textbf{57.0} & 37.5 & \underline{58.7} & \textbf{71.3} & \textbf{61.1} & \textbf{81.0} & \textbf{56.8} \\ \hline
        \end{tabular}
    }
\end{table}
\begin{table}[t]
    \scriptsize
    \centering
    \caption{\label{table:superglue_fewshot_logit} 3-shot acc.\ on SuperGLUE with logit scoring. Best results in bold, second best underline. }
    {
        \begin{tabular}{@{}|c|cccccccc|c|@{}}
            \hline
        Model & WSC & WIC & RTE & CB & MultiRC & ReCoRD & BoolQ & COPA & Average \\ 
        \hline
        OPT-125M & 36.5 & \textbf{50.2} & 47.3 & \underline{44.6} & \textbf{57.9} & \underline{44.9} & 41.9 & 60.0 & \underline{47.9} \\
        GPT-Neo-125M & \underline{38.5} & \underline{50.0} & \underline{53.1} & 17.9 & \underline{56.3} & 39.6 & \textbf{62.1} & \underline{60.0} & 47.2 \\
        \textbf{Hybrid \hthree-125M} & \textbf{43.3} & 49.1 & \textbf{58.1} & \textbf{51.8} & 48.9 & \textbf{55.0} & \underline{56.1} & \textbf{67.0} & \textbf{53.7} \\ \hline 
        GPT-2 medium (355M) & 36.5 & \textbf{50.5} & \underline{48.0} & 8.9 & 43.5 & \underline{53.3} & 58.8 & \underline{65.0} & 45.6 \\
        OPT-350M & \underline{37.5} & \underline{50.0} & 45.8 & \textbf{44.6} & \underline{49.8} & 51.4 & \textbf{61.7} & 60.0 & \underline{50.1} \\
        \textbf{Hybrid \hthree-355M} & \textbf{42.3} & 47.5 & \textbf{50.5} & \underline{28.6} & \textbf{59.7} & \textbf{62.3} & \underline{60.5} & \textbf{69.0} & \textbf{52.6} \\ \hline
        OPT-1.3B & \textbf{44.2} & \textbf{51.1} & \underline{53.4} & 16.1 & \textbf{59.9} & \underline{62.1} & 38.3 & \underline{70.0} & 49.4 \\
        GPT-Neo-1.3B & 35.6 & \underline{50.6} & 47.3 & \textbf{32.1} & \textbf{59.9} & 55.7 & \textbf{61.2} & 67.0 & \underline{51.2} \\
        \textbf{Hybrid \hthree-1.3B} & \underline{36.5} & 49.2 & \textbf{55.2} & \underline{23.2} & \underline{59.3} & \textbf{67.6} & \underline{56.9} & \textbf{76.0} & \textbf{53.0} \\ \hline
        OPT-2.7B & \underline{44.2} & \underline{50.5} & \textbf{53.4} & 17.9 & \underline{59.2} & \underline{66.0} & \textbf{62.0} & \underline{71.0} & \underline{53.0} \\
        GPT-Neo-2.7B & \textbf{49.0} & \textbf{51.9} & \underline{51.6} & \underline{21.4} & 57.0 & 60.0 & 56.0 & 68.0 & 51.9 \\
        \textbf{Hybrid \hthree-2.7B} & 36.5 & 45.6 & 47.3 & \textbf{46.4} & \textbf{59.4} & \textbf{71.1} & \underline{60.6} & \textbf{77.0} & \textbf{55.5} \\ \hline
        \end{tabular}
    }
\end{table}

To understand how well capturing the synthetics in Section~\ref{sec:synthetics} translates to language modeling, we train two hybrid hybrid \hthree-attention language models at sizes 125M, 355M, 1.3B, and 2.7B, and we evaluate their performance against Transformers.
The hybrid models match or exceed the quality of Transformers in perplexity and zero/few-shot learning.
We also validate that \hthree models retain strong performance on non-text sequence modeling.
Appendix~\ref{sec:app_additional_experiments} contains additional experiments on more datasets, length extrapolation, and scaling with data.

\subsection{Language Modeling}
\label{subsec:language_modeling}

We compare hybrid \hthree-attention language models against Transformer-based language models.
We evaluate language modeling performance using perplexity, zero-shot learning, and few-shot learning performance.
Hybrid \hthree models outperform Transformers, which suggests that closing the gap between SSMs and attention on the synthetic languages translates to real language modeling capabilities.
We also report the generation speed of hybrid \hthree models compared to Transformers; since SSMs are recurrent models, they can generate tokens \num{2.4$\times$} faster than Transformers.
Appendix~\ref{sec:app_additional_experiments} shows performance of pure \hthree language models on these same evaluation metrics.

\paragraph{Setup}
We train hybrid models at sizes 125M, 355M, 1.3B, and 2.7B on the Pile~\citep{gao2020pile} for 400B tokens.
We compare against checkpoints of equivalent sizes from
Open-AI~\citep{radford2019language} and GPT-Neo\footnote{There
  is no pretrained GPT-Neo at the 350M size.}~\citep{gpt-neo},
from HuggingFace~\citep{wolf-etal-2020-transformers}.

\paragraph{Perplexity}
Table \ref{table:gpt} shows perplexity on the Pile~\citep{gao2020pile}, OpenWebText~\citep{Gokaslan2019OpenWeb}, and WikiText-103~\citep{merity2016pointer}. 
On the Pile, our 125M hybrid model outperforms GPT-Neo, which was also trained on the Pile.
Our hybrid models also outperform GPT-Neo models and GPT-2 models on zero-shot transfer to OpenWebText and WikiText103.
We report the PPL of GPT-2 models for context, though the performance is not directly comparable since they were trained on different data.

\paragraph{Zero- and Few-shot Performance}
We compare the zero- and few-shot performance of hybrid \hthree language models against OPT~\citep{zhang2022opt}, GPT-Neo, and GPT-2 models, where public checkpoints are available.
We report performance with rank classification on the logits of the possible choices (see Appendix~\ref{sec:app_generation} for raw generation).
Table~\ref{table:superglue_zeroshot_logit} reports zero-shot performance on the SuperGLUE benchmark, and Table~\ref{table:superglue_fewshot_logit} reports the 3-shot performance.
Following the perplexity results, the hybrid language models outperform or match the best the Transformer baseline on more than half the tasks.

\begin{table}[t]
\centering
    \small
    \centering
    \caption{\label{table:training_time} Inference throughput on A100 80GB, 1.3B models.
    Batch size 64, prompt length 512, 1024, or 1536, and generating 128 tokens
    per sequence in the batch (i.e., 64 $\times$ 128 tokens in a batch). Hybrid
    \hthree is up to 2.4$\times$ faster than a Transformer of similar size in inference. The
    difference is larger for longer sequences.}
    {
        \begin{tabular}{@{}|c|c|c|c|@{}}
            \hline
        Tokens/s & Prompt length 512 & Prompt length 1024 & Prompt length 1536 \\ 
        \hline
        Transformer-1.3B & 1340 & 770 & 520 \\
        Hybrid \hthree-1.3B & 1980 & 1580 & 1240 \\ \hline
        \end{tabular}
    }
\end{table}
\paragraph{Language Modeling Inference}
Finally, since SSMs are recurrent models, they admit faster text generation than Transformers.
Table~\ref{table:training_time} shows inference throughput of a 1.3B-parameter hybrid model compared to a Transformer.
The hybrid model has up to \num{2.4$\times$} higher throughput.


\section{\fastfft Evaluation \label{sec:eval_efficiency}}
We evaluate how well \fastfft speeds up SSMs.
\fastfft sets state-of-the-art performance on the long range arena benchmark~\citep{tay2020long} using S4~\citep{gu2022efficiently}.
We report performance of training \hthree module with \fastfft compared to attention at various sequence lengths, from 256 to 32K and demonstrate nearly linear scaling.

\begin{table}[t]
    \centering
    \caption{\label{table:lra} Speedup on the LRA benchmark.}
    \centering
    \small
    \begin{tabular}{|c|c|}
    \hline
    Models &  Speedup \\
    \hline
    Transformer & 1$\times$  \\
    FlashAttention~\citep{dao2022flashattention} & 2.4$\times$ \\
    Block-sparse FlashAttention~\citep{dao2022flashattention} & 2.8$\times$ \\
    \hline
    S4~\citep{gu2022train} & 2.9$\times$ \\
    S4 with \fastfft & \num{5.8$\times$} \\ \hline
    \end{tabular}
\end{table}

\paragraph{Long Range Arena}
The Long Range Arena (LRA) benchmark~\citep{tay2020long} is a benchmark for long-range sequence modeling.
The state-of-the-art approach, S4~\citep{gu2022train}, is an SSM.
Table~\ref{table:lra} shows that \fastfft accelerates S4 by 2$\times$, outperforming Transformers by 5.8$\times$.

\begin{figure}
    \centering
    \includegraphics[width=\textwidth]{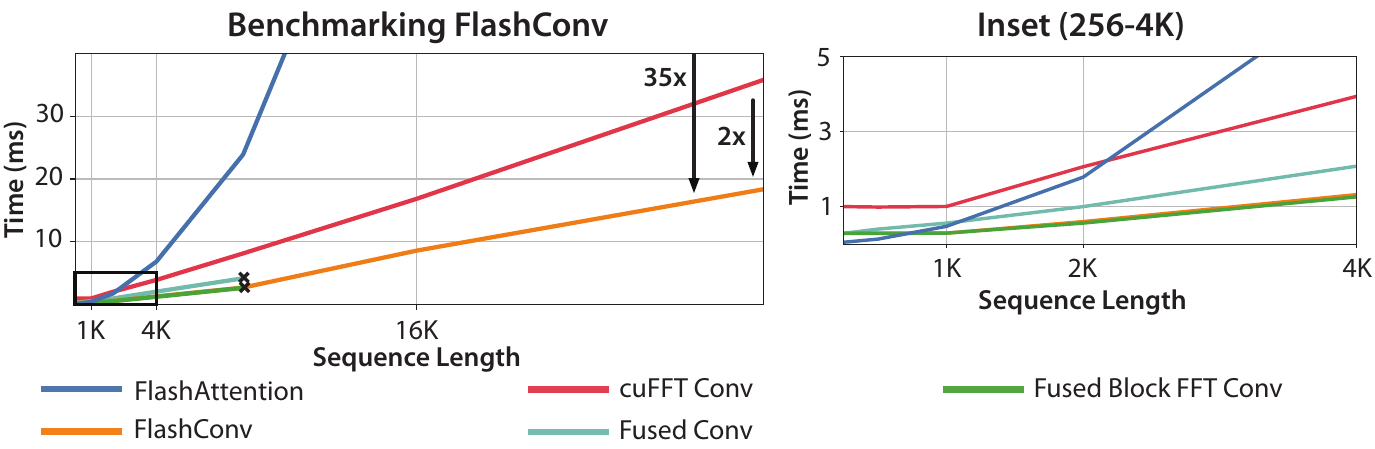}
    \caption{\label{fig:fftconv_speed}
      We compare the speed of different algorithms to perform FFT-based
      convolution, along with FlashAttention~\citep{dao2022flashattention} (the fastest attention
      implementation we know of).
      We use batch size 8, hidden dimension 1024, and varying sequence length
      from 256 to 32k, and measure on an A100-SMX4-40GB GPU.
      We see that kernel fusion gives up to 3.4$\times$ speedup over naive FFTConv
      for short sequences (up to 512), block FFT gives up to 2$\times$ speedup for
      medium length sequences (1k - 8k), and state-passing allows 2.3$\times$ faster
      FFTConv for long sequences (16k and above).
    }
\end{figure}
\paragraph{Benchmark \hthree Against Attention}
We benchmark the time to run the forward and backward pass of \hthree with \fastfft against attention.
\fastfft maintains nearly linear scaling, even to very long sequence lengths.
\cref{fig:fftconv_speed} shows overall 2-3$\times$ speedup over FFTConv with cuFFT using our techniques
(block FFT, state-passing).
Simple kernel fusion (even without block FFT) can yield speedup over cuFFT for short sequences, since memory reads/writes are the bottleneck for short sequences.
For long sequences, SSMs using state passing can be dozens of times faster
than even the fastest attention implementation.



\section{Conclusion}
\label{sec:conc}

Our main goal is to understand and narrow the gap between attention and SSMs in
language modeling in terms of modeling capabilities and hardware efficiency.
Our exploration based on synthetic language tasks motivated us to design the
\hthree layer, which is surprisingly competitive with attention.
Our \textsc{BlockFFTConv} algorithm exploits matrix multiplication units and the
dual recurrent--convolution view of SSMs to substantially speed up SSMs, reducing
the hardware barrier between attention and SSMs.
We are excited about several future directions.
Our \hthree layer is a simple combination of two SSMs, and more
sophisticated designs could be more expressive.
Our encouraging results on language models up to 1.3B parameters suggests that
scaling SSMs to larger sizes is a promising avenue.
Since simply adding two attention layers to \hthree models already
outperforms both the pure \hthree model and Transformers, we are optimistic
about combining the complementary strengths of SSMs and attention in the future.

\ifarxiv

\else
\textbf{Reproducibility Statement.} To facilitate the reproducibility of our
algorithms and results, (i) we include a link to downloadable source code in
supplementary materials, (ii) for our theoretical statements and results, we
include clear explanations of any assumptions and a complete proof of the claims
in~\cref{sec:proofs}; for any datasets used in the experiments, a complete description of the data processing steps is in~\cref{sec:app_exp_details}.
We will also release model checkpoints for all our models.

\textbf{Ethics Statement.}
Our work seeks to understand the fundamental capabilities and limitations of newly-emerging model architectures.
As the amount of data and model size grows, we also week to understand how to make training these models more efficient---and run inference more efficiently.
This potentially connects to energy savings during model development and deployment.
We also note that the relative underutilization of tensor cores in the FFT convolutions of state space models (even with our block FFT) suggests that consumer GPUs may be able to train models at a cheaper price point.

However, as with any language model training, developing new techniques may impact a wide range of applications, each with potential benefits and harms.
For example, making language model training cheaper and making inference more efficient make it cheaper to spread disinformation.
Similarly, improving the efficiency of model training may not reduce the overall environmental footprint of training, since the same resources may be used to train more models, or train the same models for longer.
While our work makes partial progress on the fronts of efficiency and understanding, it does not explicitly address the issues of fairness and bias in language models.
\fi

\section*{Acknowledgments}

We thank Albert Gu for helpful discussion regarding the model architecture, and
more importantly for sending us daily hippo videos.
We thank Together Computer
for providing portions of the compute used to train models in this paper.
We gratefully acknowledge the support of NIH under No. U54EB020405 (Mobilize), NSF under Nos. CCF1763315 (Beyond Sparsity), CCF1563078 (Volume to Velocity), and 1937301 (RTML); ARL under No. W911NF-21-2-0251 (Interactive Human-AI Teaming); ONR under No. N000141712266 (Unifying Weak Supervision); ONR N00014-20-1-2480: Understanding and Applying Non-Euclidean Geometry in Machine Learning; N000142012275 (NEPTUNE); NXP, Xilinx, LETI-CEA, Intel, IBM, Microsoft, NEC, Toshiba, TSMC, ARM, Hitachi, BASF, Accenture, Ericsson, Qualcomm, Analog Devices, Google Cloud, Salesforce, Total, the HAI-GCP Cloud Credits for Research program,  the Stanford Data Science Initiative (SDSI),
Department of Defense (DoD) through the National Defense Science and
Engineering Graduate Fellowship (NDSEG) Program, 
Wu Tsai Neuroscience Stanford Interdisciplinary Graduate Fellowship,
and members of the Stanford DAWN project: Facebook, Google, and VMWare. The U.S. Government is authorized to reproduce and distribute reprints for Governmental purposes notwithstanding any copyright notation thereon. Any opinions, findings, and conclusions or recommendations expressed in this material are those of the authors and do not necessarily reflect the views, policies, or endorsements, either expressed or implied, of DARPA, NIH, ONR, or the U.S. Government.
Atri Rudra’s research is supported by NSF grant CCF-1763481.

\ifarxiv
\bibliographystyle{plain} 
\else
\bibliographystyle{iclr2023_conference}
\fi
\bibliography{main}

\begin{thebibliography}{10}

\bibitem{ba2016using}
Jimmy Ba, Geoffrey~E Hinton, Volodymyr Mnih, Joel~Z Leibo, and Catalin Ionescu.
\newblock Using fast weights to attend to the recent past.
\newblock {\em Advances in neural information processing systems}, 29, 2016.

\bibitem{baevski2018adaptive}
Alexei Baevski and Michael Auli.
\newblock Adaptive input representations for neural language modeling.
\newblock In {\em International Conference on Learning Representations}, 2018.

\bibitem{bailey1990ffts}
David~H Bailey.
\newblock {FFT}s in external or hierarchical memory.
\newblock {\em The journal of Supercomputing}, 4(1):23--35, 1990.

\bibitem{gpt-neo}
Sid Black, Leo Gao, Phil Wang, Connor Leahy, and Stella Biderman.
\newblock {GPT-Neo: Large Scale Autoregressive Language Modeling with
  Mesh-Tensorflow}, March 2021.
\newblock {If you use this software, please cite it using these metadata.}

\bibitem{bommasani2021opportunities}
Rishi Bommasani, Drew~A Hudson, Ehsan Adeli, Russ Altman, Simran Arora, Sydney
  von Arx, Michael~S Bernstein, Jeannette Bohg, Antoine Bosselut, Emma
  Brunskill, et~al.
\newblock On the opportunities and risks of foundation models.
\newblock {\em arXiv preprint arXiv:2108.07258}, 2021.

\bibitem{brogan1974modern}
Willian~L Brogan.
\newblock Modern control theory, 1974.

\bibitem{brown2020language}
Tom Brown, Benjamin Mann, Nick Ryder, Melanie Subbiah, Jared~D Kaplan, Prafulla
  Dhariwal, Arvind Neelakantan, Pranav Shyam, Girish Sastry, Amanda Askell,
  et~al.
\newblock Language models are few-shot learners.
\newblock {\em Advances in neural information processing systems},
  33:1877--1901, 2020.

\bibitem{cho2014properties}
Kyunghyun Cho, Bart Van~Merri{\"e}nboer, Dzmitry Bahdanau, and Yoshua Bengio.
\newblock On the properties of neural machine translation: Encoder-decoder
  approaches.
\newblock {\em arXiv preprint arXiv:1409.1259}, 2014.

\bibitem{choromanski2020rethinking}
Krzysztof~Marcin Choromanski, Valerii Likhosherstov, David Dohan, Xingyou Song,
  Andreea Gane, Tamas Sarlos, Peter Hawkins, Jared~Quincy Davis, Afroz
  Mohiuddin, Lukasz Kaiser, et~al.
\newblock Rethinking attention with performers.
\newblock In {\em International Conference on Learning Representations (ICLR)},
  2020.

\bibitem{chowdhery2022palm}
Aakanksha Chowdhery, Sharan Narang, Jacob Devlin, Maarten Bosma, Gaurav Mishra,
  Adam Roberts, Paul Barham, Hyung~Won Chung, Charles Sutton, Sebastian
  Gehrmann, et~al.
\newblock Palm: Scaling language modeling with pathways.
\newblock {\em arXiv preprint arXiv:2204.02311}, 2022.

\bibitem{cooley1965an}
James~W. Cooley and John~W. Tukey.
\newblock An algorithm for the machine calculation of complex fourier series.
\newblock {\em Mathematics of Computation}, 19(90):297--301, 1965.

\bibitem{dadi_2020_fine}
Kamalaker Dadi, Ga{\"e}l Varoquaux, Antonia Machlouzarides-Shalit, Krzysztof~J
  Gorgolewski, Demian Wassermann, Bertrand Thirion, and Arthur Mensch.
\newblock Fine-grain atlases of functional modes for fmri analysis.
\newblock {\em NeuroImage}, 221:117126, 2020.

\bibitem{dai2019transformer}
Zihang Dai, Zhilin Yang, Yiming Yang, Jaime~G Carbonell, Quoc Le, and Ruslan
  Salakhutdinov.
\newblock Transformer-xl: Attentive language models beyond a fixed-length
  context.
\newblock In {\em Proceedings of the 57th Annual Meeting of the Association for
  Computational Linguistics}, pages 2978--2988, 2019.

\bibitem{dao2022monarch}
Tri Dao, Beidi Chen, Nimit Sohoni, Arjun Desai, Michael Poli, Jessica Grogan,
  Alexander Liu, Aniruddh Rao, Atri Rudra, and Christopher R{\'e}.
\newblock Monarch: Expressive structured matrices for efficient and accurate
  training.
\newblock In {\em International Conference on Machine Learning (ICML)}, 2022.

\bibitem{dao2022flashattention}
Tri Dao, Daniel~Y. Fu, Stefano Ermon, Atri Rudra, and Christopher R{\'e}.
\newblock Flashattention: Fast and memory-efficient exact attention with
  io-awareness.
\newblock In {\em Advances in Neural Information Processing Systems}, 2022.

\bibitem{daras2020smyrf}
Giannis Daras, Nikita Kitaev, Augustus Odena, and Alexandros~G Dimakis.
\newblock Smyrf-efficient attention using asymmetric clustering.
\newblock {\em Advances in Neural Information Processing Systems},
  33:6476--6489, 2020.

\bibitem{edelman2022inductive}
Benjamin~L Edelman, Surbhi Goel, Sham Kakade, and Cyril Zhang.
\newblock Inductive biases and variable creation in self-attention mechanisms.
\newblock In {\em International Conference on Machine Learning}, pages
  5793--5831. PMLR, 2022.

\bibitem{elhage2021mathematical}
Nelson Elhage, Neel Nanda, Catherine Olsson, Tom Henighan, Nicholas Joseph, Ben
  Mann, Amanda Askell, Yuntao Bai, Anna Chen, Tom Conerly, Nova DasSarma, Dawn
  Drain, Deep Ganguli, Zac Hatfield-Dodds, Danny Hernandez, Andy Jones, Jackson
  Kernion, Liane Lovitt, Kamal Ndousse, Dario Amodei, Tom Brown, Jack Clark,
  Jared Kaplan, Sam McCandlish, and Chris Olah.
\newblock A mathematical framework for transformer circuits.
\newblock {\em Transformer Circuits Thread}, 2021.
\newblock https://transformer-circuits.pub/2021/framework/index.html.

\bibitem{fischl_2012_freesurfer}
Bruce Fischl.
\newblock Freesurfer.
\newblock {\em Neuroimage}, 62(2):774--781, 2012.

\bibitem{fisher2014ilae}
Robert~S Fisher, Carlos Acevedo, Alexis Arzimanoglou, Alicia Bogacz, J~Helen
  Cross, Christian~E Elger, Jerome Engel~Jr, Lars Forsgren, Jacqueline~A
  French, Mike Glynn, et~al.
\newblock Ilae official report: a practical clinical definition of epilepsy.
\newblock {\em Epilepsia}, 55(4):475--482, 2014.

\bibitem{gao2020pile}
Leo Gao, Stella Biderman, Sid Black, Laurence Golding, Travis Hoppe, Charles
  Foster, Jason Phang, Horace He, Anish Thite, Noa Nabeshima, et~al.
\newblock The pile: An 800gb dataset of diverse text for language modeling.
\newblock {\em arXiv preprint arXiv:2101.00027}, 2020.

\bibitem{goel2022s}
Karan Goel, Albert Gu, Chris Donahue, and Christopher R{\'e}.
\newblock It's raw! audio generation with state-space models.
\newblock {\em arXiv preprint arXiv:2202.09729}, 2022.

\bibitem{Gokaslan2019OpenWeb}
Aaron Gokaslan, Vanya Cohen, Ellie Pavlick, and Stefanie Tellex.
\newblock Openwebtext corpus, 2019.

\bibitem{gu2020hippo}
Albert Gu, Tri Dao, Stefano Ermon, Atri Rudra, and Christopher R{\'e}.
\newblock Hippo: Recurrent memory with optimal polynomial projections.
\newblock {\em Advances in Neural Information Processing Systems},
  33:1474--1487, 2020.

\bibitem{gu2022efficiently}
Albert Gu, Karan Goel, and Christopher R\'e.
\newblock Efficiently modeling long sequences with structured state spaces.
\newblock In {\em The International Conference on Learning Representations
  ({ICLR})}, 2022.

\bibitem{gu2022parameterization}
Albert Gu, Ankit Gupta, Karan Goel, and Christopher R{\'e}.
\newblock On the parameterization and initialization of diagonal state space
  models.
\newblock In {\em Advances in Neural Information Processing Systems}, 2022.

\bibitem{gu2021combining}
Albert Gu, Isys Johnson, Karan Goel, Khaled Saab, Tri Dao, Atri Rudra, and
  Christopher R{\'e}.
\newblock Combining recurrent, convolutional, and continuous-time models with
  linear state-space layers.
\newblock {\em Advances in neural information processing systems}, 34, 2021.

\bibitem{gu2022train}
Albert Gu, Isys Johnson, Aman Timalsina, Atri Rudra, and Christopher R{\'e}.
\newblock How to train your hippo: State space models with generalized
  orthogonal basis projections.
\newblock {\em arXiv preprint arXiv:2206.12037}, 2022.

\bibitem{gupta2022diagonal}
Ankit Gupta, Albert Gu, and Jonathan Berant.
\newblock Diagonal state spaces are as effective as structured state spaces.
\newblock In {\em Advances in Neural Information Processing Systems}, 2022.

\bibitem{hawthorne2022general}
Curtis Hawthorne, Andrew Jaegle, C{\u{a}}t{\u{a}}lina Cangea, Sebastian
  Borgeaud, Charlie Nash, Mateusz Malinowski, Sander Dieleman, Oriol Vinyals,
  Matthew Botvinick, Ian Simon, et~al.
\newblock General-purpose, long-context autoregressive modeling with perceiver
  ar.
\newblock {\em arXiv preprint arXiv:2202.07765}, 2022.

\bibitem{hochreiter1996lstm}
Sepp Hochreiter and J{\"u}rgen Schmidhuber.
\newblock Lstm can solve hard long time lag problems.
\newblock {\em Advances in neural information processing systems}, 9, 1996.

\bibitem{hoffmann2022training}
Jordan Hoffmann, Sebastian Borgeaud, Arthur Mensch, Elena Buchatskaya, Trevor
  Cai, Eliza Rutherford, Diego de~Las Casas, Lisa~Anne Hendricks, Johannes
  Welbl, Aidan Clark, et~al.
\newblock Training compute-optimal large language models.
\newblock {\em arXiv preprint arXiv:2203.15556}, 2022.

\bibitem{hooker2021hardware}
Sara Hooker.
\newblock The hardware lottery.
\newblock {\em Communications of the ACM}, 64(12):58--65, 2021.

\bibitem{kao2021optimized}
Sheng-Chun Kao, Suvinay Subramanian, Gaurav Agrawal, and Tushar Krishna.
\newblock An optimized dataflow for mitigating attention performance
  bottlenecks.
\newblock {\em arXiv preprint arXiv:2107.06419}, 2021.

\bibitem{katharopoulos2020transformers}
Angelos Katharopoulos, Apoorv Vyas, Nikolaos Pappas, and Fran{\c{c}}ois
  Fleuret.
\newblock Transformers are {RNN}s: Fast autoregressive transformers with linear
  attention.
\newblock In {\em International Conference on Machine Learning}, pages
  5156--5165. PMLR, 2020.

\bibitem{keles2022computational}
Feyza~Duman Keles, Pruthuvi~Mahesakya Wijewardena, and Chinmay Hegde.
\newblock On the computational complexity of self-attention.
\newblock {\em arXiv preprint arXiv:2209.04881}, 2022.

\bibitem{kerr2012impact}
Michael~Patrick Kerr.
\newblock The impact of epilepsy on patients' lives.
\newblock {\em Acta Neurologica Scandinavica}, 126:1--9, 2012.

\bibitem{king_2019_functional}
Maedbh King, Carlos~R Hernandez-Castillo, Russell~A Poldrack, Richard~B Ivry,
  and J{\"o}rn Diedrichsen.
\newblock Functional boundaries in the human cerebellum revealed by a
  multi-domain task battery.
\newblock {\em Nature neuroscience}, 22(8):1371--1378, 2019.

\bibitem{kitaev2020reformer}
Nikita Kitaev, {\L}ukasz Kaiser, and Anselm Levskaya.
\newblock Reformer: The efficient transformer.
\newblock In {\em The International Conference on Machine Learning ({ICML})},
  2020.

\bibitem{luo2021stable}
Shengjie Luo, Shanda Li, Tianle Cai, Di~He, Dinglan Peng, Shuxin Zheng, Guolin
  Ke, Liwei Wang, and Tie-Yan Liu.
\newblock Stable, fast and accurate: Kernelized attention with relative
  positional encoding.
\newblock {\em Advances in Neural Information Processing Systems},
  34:22795--22807, 2021.

\bibitem{markiewicz_2021_openneuro}
Christopher~J Markiewicz, Krzysztof~J Gorgolewski, Franklin Feingold, Ross
  Blair, Yaroslav~O Halchenko, Eric Miller, Nell Hardcastle, Joe Wexler, Oscar
  Esteban, Mathias Goncavles, et~al.
\newblock The openneuro resource for sharing of neuroscience data.
\newblock {\em Elife}, 10:e71774, 2021.

\bibitem{mehta2022long}
Harsh Mehta, Ankit Gupta, Ashok Cutkosky, and Behnam Neyshabur.
\newblock Long range language modeling via gated state spaces.
\newblock {\em arXiv preprint arXiv:2206.13947}, 2022.

\bibitem{merity2016pointer}
Stephen Merity, Caiming Xiong, James Bradbury, and Richard Socher.
\newblock Pointer sentinel mixture models, 2016.

\bibitem{nguyen2022s4nd}
Eric Nguyen, Karan Goel, Albert Gu, Gordon Downs, Preey Shah, Tri Dao, Stephen
  Baccus, and Christopher R{\'e}.
\newblock S4nd: Modeling images and videos as multidimensional signals with
  state spaces.
\newblock In {\em Advances in Neural Information Processing Systems}, 2022.

\bibitem{nvidia2017nvidia}
NVIDIA.
\newblock Nvidia {T}esla {V}100 {GPU} architecture, 2017.

\bibitem{nvidia2020nvidia}
NVIDIA.
\newblock Nvidia {A}100 tensor core {GPU} architecture, 2020.

\bibitem{cufft}
NVIDIA.
\newblock cufft v11.7.1 documentation, 2022.
\newblock https://docs.nvidia.com/cuda/cufft/index.html.

\bibitem{nvidia2022nvidia}
NVIDIA.
\newblock Nvidia {H}100 tensor core {GPU} architecture, 2022.

\bibitem{olsson2022context}
Catherine Olsson, Nelson Elhage, Neel Nanda, Nicholas Joseph, Nova DasSarma,
  Tom Henighan, Ben Mann, Amanda Askell, Yuntao Bai, Anna Chen, Tom Conerly,
  Dawn Drain, Deep Ganguli, Zac Hatfield-Dodds, Danny Hernandez, Scott
  Johnston, Andy Jones, Jackson Kernion, Liane Lovitt, Kamal Ndousse, Dario
  Amodei, Tom Brown, Jack Clark, Jared Kaplan, Sam McCandlish, and Chris Olah.
\newblock In-context learning and induction heads.
\newblock {\em Transformer Circuits Thread}, 2022.
\newblock
  https://transformer-circuits.pub/2022/in-context-learning-and-induction-heads/index.html.

\bibitem{oppenheim1978applications}
Alan~V Oppenheim.
\newblock Applications of digital signal processing.
\newblock {\em Englewood Cliffs}, 1978.

\bibitem{oppenheim2001discrete}
Alan~V Oppenheim, John~R Buck, and Ronald~W Schafer.
\newblock {\em Discrete-time signal processing. Vol. 2}.
\newblock Upper Saddle River, NJ: Prentice Hall, 2001.

\bibitem{rabe2021self}
Markus~N Rabe and Charles Staats.
\newblock Self-attention does not need $ {O} (n^2) $ memory.
\newblock {\em arXiv preprint arXiv:2112.05682}, 2021.

\bibitem{radford2019language}
Alec Radford, Jeffrey Wu, Rewon Child, David Luan, Dario Amodei, Ilya
  Sutskever, et~al.
\newblock Language models are unsupervised multitask learners.
\newblock {\em OpenAI blog}, 1(8):9, 2019.

\bibitem{rae2019compressive}
Jack~W Rae, Anna Potapenko, Siddhant~M Jayakumar, Chloe Hillier, and Timothy~P
  Lillicrap.
\newblock Compressive transformers for long-range sequence modelling.
\newblock In {\em International Conference on Learning Representations}, 2019.

\bibitem{saab2020weak}
Khaled Saab, Jared Dunnmon, Christopher R{\'e}, Daniel Rubin, and Christopher
  Lee-Messer.
\newblock Weak supervision as an efficient approach for automated seizure
  detection in electroencephalography.
\newblock {\em NPJ digital medicine}, 3(1):1--12, 2020.

\bibitem{shah2018temple}
Vinit Shah, Eva Von~Weltin, Silvia Lopez, James~Riley McHugh, Lillian Veloso,
  Meysam Golmohammadi, Iyad Obeid, and Joseph Picone.
\newblock The temple university hospital seizure detection corpus.
\newblock {\em Frontiers in neuroinformatics}, 12:83, 2018.

\bibitem{siddiqui2020review}
Mohammad~Khubeb Siddiqui, Ruben Morales-Menendez, Xiaodi Huang, and Nasir
  Hussain.
\newblock A review of epileptic seizure detection using machine learning
  classifiers.
\newblock {\em Brain informatics}, 7(1):1--18, 2020.

\bibitem{tang2021self}
Siyi Tang, Jared Dunnmon, Khaled~Kamal Saab, Xuan Zhang, Qianying Huang,
  Florian Dubost, Daniel Rubin, and Christopher Lee-Messer.
\newblock Self-supervised graph neural networks for improved
  electroencephalographic seizure analysis.
\newblock In {\em International Conference on Learning Representations}, 2021.

\bibitem{tay2020long}
Yi~Tay, Mostafa Dehghani, Samira Abnar, Yikang Shen, Dara Bahri, Philip Pham,
  Jinfeng Rao, Liu Yang, Sebastian Ruder, and Donald Metzler.
\newblock Long range arena: A benchmark for efficient transformers.
\newblock In {\em International Conference on Learning Representations}, 2020.

\bibitem{thomas_fmri_2022}
Armin~W Thomas, Christopher R{\'e}, and Russell~A Poldrack.
\newblock Self-supervised learning of brain dynamics from broad neuroimaging
  data.
\newblock {\em arXiv preprint arXiv:2206.11417}, 2022.

\bibitem{van_2013_wu}
David~C Van~Essen, Stephen~M Smith, Deanna~M Barch, Timothy~EJ Behrens, Essa
  Yacoub, Kamil Ugurbil, Wu-Minn~HCP Consortium, et~al.
\newblock The wu-minn human connectome project: an overview.
\newblock {\em Neuroimage}, 80:62--79, 2013.

\bibitem{vaswani2017attention}
Ashish Vaswani, Noam Shazeer, Niki Parmar, Jakob Uszkoreit, Llion Jones,
  Aidan~N Gomez, {\L}ukasz Kaiser, and Illia Polosukhin.
\newblock Attention is all you need.
\newblock {\em Advances in neural information processing systems}, 30, 2017.

\bibitem{wang2020linformer}
Sinong Wang, Belinda~Z Li, Madian Khabsa, Han Fang, and Hao Ma.
\newblock Linformer: Self-attention with linear complexity.
\newblock {\em arXiv preprint arXiv:2006.04768}, 2020.

\bibitem{warden2018speech}
Pete Warden.
\newblock Speech commands: A dataset for limited-vocabulary speech recognition.
\newblock {\em arXiv preprint arXiv:1804.03209}, 2018.

\bibitem{wolf-etal-2020-transformers}
Thomas Wolf, Lysandre Debut, Victor Sanh, Julien Chaumond, Clement Delangue,
  Anthony Moi, Pierric Cistac, Tim Rault, Rémi Louf, Morgan Funtowicz, Joe
  Davison, Sam Shleifer, Patrick von Platen, Clara Ma, Yacine Jernite, Julien
  Plu, Canwen Xu, Teven~Le Scao, Sylvain Gugger, Mariama Drame, Quentin Lhoest,
  and Alexander~M. Rush.
\newblock Transformers: State-of-the-art natural language processing.
\newblock In {\em Proceedings of the 2020 Conference on Empirical Methods in
  Natural Language Processing: System Demonstrations}, pages 38--45, Online,
  October 2020. Association for Computational Linguistics.

\bibitem{zhang2022opt}
Susan Zhang, Stephen Roller, Naman Goyal, Mikel Artetxe, Moya Chen, Shuohui
  Chen, Christopher Dewan, Mona Diab, Xian Li, Xi~Victoria Lin, et~al.
\newblock {OPT}: Open pre-trained transformer language models.
\newblock {\em arXiv preprint arXiv:2205.01068}, 2022.

\end{thebibliography}

\appendix
\newpage

\section{Related Work}
\label{sec:related}

\textbf{State space models} have shown promise in modeling sequential data, including time series data~\citep{gu2022efficiently}, audio~\citep{goel2022s}, and visual data~\citep{nguyen2022s4nd}.
Our model builds off work on simplifying and parameterizing diagonal versions of S4~\citep{gu2022parameterization,gupta2022diagonal, gu2022train}.
Gated state spaces~\citep{mehta2022long} also aim to adapt SSMs to language modeling, but our results suggest that the GSS model does not perform as well as \hthree (or even as well as earlier SSMs like S4D).
The idea to combine SSMs with attention in hybrid models is not new; Mehta et al.~\citep{mehta2022long} also showed that interleaving attention with their GSS layer can improve performance, which we also validate on our OpenWebText experiments.
These positive results suggest that attention and SSMs are complementary, and that hybrid models may be a promising direction for future work.

\textbf{Large language foundation models}~\citep{bommasani2021opportunities} have demonstrated the power of scaling attention-based networks to billions of parameters and training them on trillions of tokens~\citep{hoffmann2022training}.
Understanding the mechanistic basis~\citep{elhage2021mathematical} behind these models may yield insights into better design choices for future models.
These and similar explorations have informed the design of \hthree and our selection of synthetic languages.
A number of recent works have also explored how to address the shortcomings of attention by approximating the attention computation~\citep{wang2020linformer,katharopoulos2020transformers, choromanski2020rethinking,tay2020long, kitaev2020reformer, daras2020smyrf}.
We believe these efforts are complementary to SSMs, and we are excited to see how they can be combined in future work.

\textbf{Linear attention}~\citep{katharopoulos2020transformers} and classical sequence models like RNNs serve as inspiration for \hthree.
Appendix~\ref{app:linear_attention} draws a direct connection between linear attention and LTI systems.
Luo et al.~\citep{luo2021stable} also propose a variant of linear attention that can achieve $O(n \log n)$ scaling in sequence length.
Appendix~\ref{sec:app_additional_experiments} evaluates linear attention on language modeling, and finds that it underperforms exact attention, whereas \hthree outperforms attention.
The multiplicative interactions in \hthree are reminiscent of gating mechanisms in LSTMs~\citep{hochreiter1996lstm} and GRUs~\citep{cho2014properties}, which suggests that architectural lessons from these sequence models may be useful for adapting SSMs to language modeling.
A number of algorithms for scaling attention to longer sequences have also been proposed, such as Transformer-XL~\citep{dai2019transformer}, Reformer~\citep{kitaev2020reformer}, Performer~\citep{choromanski2020rethinking}, and Perceiver AR~\citep{hawthorne2022general}.
Some of these approaches underperform exact attention on language modeling, and may be slower in wall-clock speed~\citep{dao2022flashattention}.
A thorough comparison of these alternatives to exact attention and how well they scale in model size and amount of training data is fruitful future work.

\textbf{FFT} algorithms are used in a wide variety of applications, including signal processing~\citep{oppenheim1978applications}, control theory~\citep{brogan1974modern}, and more.
Various algorithms for computing the FFT have existed for decades~\citep{oppenheim2001discrete}.
We hope our work on appealing to these classic algorithms to accelerate new applications such as learned SSMs will inspire future algorithmic exploration, even if hardware is not designed for them~\citep{hooker2021hardware}.


\section{Linear Attention and Time-Varying Systems}
\label{app:linear_attention}

We draw some connections from linear attention to LTI systems and SSMs.

We first present linear attention as a linear time-varying system, and show how converting it to a linear time-invariant system matches \hthree.

\paragraph{Linear time-varying system and linear attention}

In general, a layer in a sequence model takes in a sequence and outputs a
sequence.
Many of these take the form of a linear time-varying system (thanks to the
Picard-Lindelof theorem, nonlinear systems can be approximated by a series of
linear system):
\begin{align*}
  x_i &= \vA_i x_{i-1} + \vB_i u_i, \\
  y_i &= \vC_i x_i + \vD_i u_i.
\end{align*}
This has the same form as SSMs (\cref{sec:background}), except that the matrices
can depend on the timestep.

Recall the form of linear attention from~\cref{sec:background}.
For the purpose of approximation, we ignore the denominator in linear
attention~\cref{sec:background} (i.e., assuming $d_i = 1$).
We see that $S_i$ is just a cumulative sum, satisfying the recurrence $S_{i+1} = S_{i} + \phi(K_{i+1}) V_{i+1}^T$.
Similarly, $O_i$ satisfies the recurrence $O_{i+1} = \phi(Q_{i+1})^T S_{i+1}$.
This is a linear time-varying system of the form $x_{i+1} = \vA x_i + \vB u_{i+1}$ and
$y_{i+1} = \vC_{i+1} x_{i+1}$ (with $\vA = I$, $\vB = I$, $u_{i} = \phi(K_{i}) V_i^T$,
$C_{i} = \phi(Q_{i})^T$).
That is, $\vA$ and $\vB$ are constant, but $C$ is time-variant.

To convert this into a linear time-invariant version, we treat the time-variant $\vC_i$ as a post-processing step.
We instead of a fixed $\vC$ for the LTI.
This yields an LTI:
\begin{align*}
  x_{i+1} &= \vA x_i + \vB \phi(K_{i}) V_i^T, \\
  y_{i+1} &= \vC x_{i},
\end{align*}
for some matrices $\vA, \vB, \vC$ that are learned.
We then apply post-processing by multiply $y_{i+1}$ with $\phi(Q_i)^T$.
Replacing $\phi(K_{i})$ with a shift SSM yields an analogue to \hthree.

\section{Method details}
\label{sec:method_details}

Since we have described the forward pass in~\cref{sec:method}, we describe here
the backward pass in details.

\subsection{Backward Pass}
We show how to compute the backward pass in a fused kernel.

Let $y = f \ast u + \vD u$.
In our case, we have $f$ and $u$ have the same length, so they are symmetric as far as the convolution is concerned.

Suppose we are given $dy = \frac{\partial l}{\partial y}$ (where $l$ is some loss function).
We wish to compute $du$, $df$, and $dD$ (which are $\frac{\partial l}{\partial u}$, $\frac{\partial l}{\partial f}$, and $\frac{\partial l}{\partial \vD}$, respectively).

The most challenging part is computing the gradient through the convolution operator - but we'll see that we can re-use our FFT infrastructure for it.
The rest of the operations are straightforward; we have $d\vD = dy u^T$.

\paragraph{Gradient of the Convolution}

Here, we'll discuss how to compute $df$ by integrating w.r.t. the convolution operator $\ast$.
As an immediate consequence, we'll be able to compute $du$ as well.

Since $f$ and $u$ are the same length $L$, $f \ast u$ and $u \ast f$ have the same result.
Thus, we'll start from $u \ast f$ here.

For some notation, let $O = u \ast f$.
Then, $dO = dy$.
Recall that $O[i] = \sum_{j=0}^{i-1} u[i-j]f[j]$.

We'll start by extending $u$ and $f$ with zeros, to give them length $2L$.
Let $u' = [u[0], u[1], \dots, u[L-1], 0, \dots, 0]$, and $f'$ extended similarly.
Let $O' = u' \ast f'$, and $O = O'[:N]$.
Assume that we have all the values of $dO'$ (we only have them for the first half, but we'll see that it doesn't matter in the end).

Let's construct a Toeplitz matrix $H_{u'}$ such that $u' \ast f' = H_{u'} f'$:
$$
H_{u'} = \begin{bmatrix}
  u'[0] & 0     & \dots & 0 \\
  u'[1] & u'[0] & \dots & 0 \\
  \vdots & \vdots & \ddots & \vdots \\
  u'[2L-1] & u'[2L-2] & \dots & u'[0]
\end{bmatrix}
$$
Since, we have $u'[i] = f'[i] = 0$ for $i \geq L$, we can actually fill in the zeros of the above matrix as well:
$$
H_{u'} = \begin{bmatrix}
  u'[0] & u'[2L-1] & \dots & u'[1] \\
  u'[1] & u'[0] & \dots & u'[2] \\
  \vdots & \vdots & \ddots & \vdots \\
  u'[2L-1] & u'[2L-2] & \dots & u'[0]
\end{bmatrix}
$$
Then, we can use the matrix multiplication chain rule to find that:
\begin{align*}
df' = H_{u'}^T dO' &= \begin{bmatrix}
  u'[0] & u'[1] & \dots & u'[2L-1] \\
  u'[2L-1] & u'[0] & \dots & u'[2L-2] \\
  \vdots & \vdots & \ddots & \vdots \\
  u'[1] & u'[2] & \dots & u'[0]
\end{bmatrix} \\
&= \begin{bmatrix}
  u'[0] & u'[-(2L-1)] & \dots & u'[-1] \\
  u'[-1] & u'[0] & \dots & u'[-2] \\
  \vdots & \vdots & \ddots & \vdots \\
  u'[-(2L-1)] & u'[-(2L-2)] & \dots & u'[0]
\end{bmatrix},
\end{align*}
where we use $u'[-i]$ to mean $u'[2L - i]$.
Notice that this matrix has the same format as $H_{u'}$!
Let $u_*' = [u'[0], u'[-1], \dots, u'[-(2N-1)]]$.
Then:
$$
df' = (u_*' \ast dO').
$$
So how do we compute $u_*'$ efficiently?
Naively, we might incur some nasty memory access issues.
But a nice property about the DFT saves us!

Let $U[i]$ be the $i$-th element of the DFT of a signal $u$.
Note that $U[i]$ is complex.
We have:
$$
U^*[i] = U[-i],
$$
where here the $*$ represents the complex conjugate.
We can use this property to compute $df'$ efficiently:
$$df' = u_*' \ast dO'= iFFT(FFT^*(u')FFT(dO')) \Rightarrow df = df'[:N] = iFFT(FFT^*(u')FFT(dy'))[:N],$$
where $FFT^*$ denotes taking the complex conjugate of the FFT, and $dy'$ denotes $dy$, padded with zeros.

\paragraph{Computing $du$}
We can use this same trick to compute $du$, except we need to add in the contribution from the original $\vD u$ term.
We end up with:
$$du = du'[:N] + \vD dy = iFFT(FFT^*(f')FFT(dy'))[:N] + \vD dy.$$

\subsection{State-Passing Matrices}
\label{sec:state-passing-matrices}

We show how to derive $\vM_{ux}$ for the state update in our state-passing algorithm.

We wish to construct a matrix $vM_{ux} \in \mathbb{R}^{m \times N'}$ such that $\vM_{ux}u = \sum_{i=1}^{N'} \vA^{N'-1}\vB u_i$.
Note that $\vA^i \vB \in \mathbb{R}^{d \times 1}$ is a column vector.
We can simply stack these column vectors to form a matrix:
$\vM_{ux} = [\vA^{N'-1}\vB, \vA^{N'-2}\vB, \dots, \vB]$.

\section{Proofs}
\label{sec:proofs}

We show parameterizations of \hthree and attention that solves the associative recall task.
We prove~\cref{thm:h3_complexity} and~\cref{thm:statepassing_correctness}.

\subsection{\hthree Expressivity}
\label{sec:app_expressivity}

This section formally describes a parameterization of \hthree that solves the associative recall task.

\subsubsection{Example Language $\Lambda$}
Consider a simple language with 4 keys and 4 values.
For concreteness, we will use the keys $\{k_1, k_2, k_3, k_4\} = L_K$ and the values $\{v_1, v_2, v_3, v_4\} = L_V$, i.e. our language $L = L_K \cup L_V$.
Given a sequence of key-value pairs with one key at the end, we want a model to generate the value associated with the key at the end.
Assume that the key at the end appeared in the sequence.

More formally, let $N+1$ be the length of the sequence, $N$ even.
The language $\Lambda$ consists of sequences $x \in L^{N+1}$.
Each sequence has an associated mapping $f_x : L_K \rightarrow L_V$.
For each sequence, the odd indices are randomly sampled from $L_K$, for $x_1, x_3, \dots, x_{N-1}$.
The even indices are defined by $f_x$: $x_{2*i} = f_x(x_{2*i-1})$, for $1 \leq i \leq N / 2$.
The last item in the sequence $x_{N+1}$, is randomly drawn from the keys that have appeared in $x$ already, i.e. $x_{N+1} \in \cup{\{x_1, x_3, \dots, x_{N-1}\}}$.
The goal of this language modeling task is to produce $f_x(x_{N+1})$ at the end of the sequence.

\subsubsection{H3 Model to Solve $\Lambda$}
We describe a toy \hthree model that can solve $\Lambda$.

Consider a model consisting of an embedding layer, an \hthree model, and an output projection with softmax.
Recall that $d$ is the dimension of the \hthree model, $m$ is the dimension of its hidden states, and $H$ is the number of heads.
Let $d = 8, m = 2, H = 4$.
Let the embedding layer map each key $k_i$ to the $e_i$ basis vector, and map each value $v_i$ to the $e_{4+i}$ basis vector.

Let $\vB_{shift}$ and $\vC_{shift}$ denote the parameters of the shift SSM, and $\vA_{diag}$, $\vB_{diag}$, and $\vC_{diag}$ denote the parameters of the diagonal SSM (let $\vD$ be zero for both).
Let $\vB_{shift} = \vB_{diag} = \vC_{diag} = e_1$.
Let $\vC_{shift}$ = $[0 1]$.
Let $\vA_{diag}$ be a diagonal matrix with $1$s along its diagonal for each \hthree.

\begin{remark}
The action of a diagonal SSM parameterized by $\vA_{diag}$, $\vB_{diag}$, and $\vC_{diag}$ is to act as a cumulative sum over all its input.
The action of shift SSM parameterized by $\vB_{shift}$ and $\vC_{shift}$ is to shift its input by one time step.
\end{remark}

Recall that the \hthree layer maps its input to $Q$, $K$, and $V$ by applying $u\vW_Q$, $u\vW_K$, and $u\vW_V$.
Let $\vW_Q$ and $\vW_K$ be the following:
$$
    \vW_Q = \vW_K = \begin{bmatrix}
        1 & 1 & 0 & 0 & 0 & 0 & 0 & 0 \\
        0 & 0 & 1 & 1 & 0 & 0 & 0 & 0 \\
        0 & 0 & 0 & 0 & 1 & 1 & 0 & 0 \\
        0 & 0 & 0 & 0 & 0 & 0 & 1 & 1 \\
        0 & 0 & 0 & 0 & 0 & 0 & 0 & 0 \\
        0 & 0 & 0 & 0 & 0 & 0 & 0 & 0 \\
        0 & 0 & 0 & 0 & 0 & 0 & 0 & 0 \\
        0 & 0 & 0 & 0 & 0 & 0 & 0 & 0 \\
    \end{bmatrix}
$$
Recall that $Q$ and $K$ are split into $H$ heads ($\vQ^{(i)}, \vK^{(i)}$ for $i \in \{1, 2, 3, 4\}$), each of which is sent to an independent \hthree.

\begin{remark}
The action of $\vW_Q$ and $\vW_K$ are to ``assign'' each key to a different \hthree head, i.e., $\vQ^{(i)}_t$ is only non-zero when $x_t = k_i$.
Similarly, $\overline{\vK}^{(i)}_t$ is only non-zero when $x_{t-1} = k_i$ (since $\overline{\vK}_t = \vK_{t-1}$ due to the time delay of the shift SSM).
\end{remark}

Let $\vW_V$ be the following:
$$
    \vW_V = \begin{bmatrix}
        0 & 0 & 0 & 0 & 0 & 0 & 0 & 0 \\
        0 & 0 & 0 & 0 & 0 & 0 & 0 & 0 \\
        0 & 0 & 0 & 0 & 0 & 0 & 0 & 0 \\
        0 & 0 & 0 & 0 & 0 & 0 & 0 & 0 \\
        0 & 0 & 0 & 0 & 0 & 0 & 0 & 0 \\
        0 & 1 & 0 & 1 & 0 & 1 & 0 & 1 \\
        1 & 0 & 1 & 0 & 1 & 0 & 1 & 0 \\
        1 & 1 & 1 & 1 & 1 & 1 & 1 & 1 \\
    \end{bmatrix}
$$
\begin{remark}
The action of this matrix is to encode the input value (as ``binary''), and send it to all \hthree heads.
E.g., $\vV_t^{(1)} = \vV_t^{(2)} = \vV_t^{(3)} = \vV_t^{(4)}$ for all $i$, and $\vV_t^{(i)} = [0, 0] \Leftrightarrow x_t = v_1$, $\vV_t^{(i)} = [0, 1] \Leftrightarrow x_t = v_2$, etc.
\end{remark}

We claim that for $x_{N+1} = k_i$, $\vO_{N+1}^{(i)}$ will be a multiple of the binary encoding of $f_x(k_i)$, and all the other heads of the output $\vO_{N+1}^{(j)}, 1 \leq j \leq 4, j \neq i$, will be zero.
Let the output projection $\vW_O$ be such that, with a non-linearity afterwards, it inverts the binary encoding to produce the embedding of the desired output $f_x(k_i)$.
We will assume such a projection exists, proof left to the reader.

\begin{proposition}\label{thm:h3_expressivity}
    The model described above solves the associative recall problem for the language $\Lambda$.
\end{proposition}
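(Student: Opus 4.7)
The plan is to trace the action of each component of the specified H3 layer position by position and verify that, at the final position $N+1$, the $i$-th head (with $x_{N+1} = k_i$) produces a positive multiple of the binary encoding of $f_x(k_i)$ while every other head outputs zero. Once this per-head statement is in place, the assumed existence of a decoding projection $\vW_O$ (together with the softmax output head) finishes the claim.

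First I would establish how each projection distributes information across heads. Because $\vW_Q = \vW_K$ maps the embedding $e_j$ of $k_j$ into the pair of coordinates $(2j-1, 2j)$ and sends every value embedding to zero, splitting into heads of size $d_h = 2$ yields the rule $\vQ^{(h)}_t = \vK^{(h)}_t = [1,1]$ when $x_t = k_h$ and zero otherwise. Inspecting $\vW_V$ analogously gives $\vV^{(h)}_t = \mathrm{bin}(v_j)$ (independent of $h$) whenever $x_t = v_j$, where $\mathrm{bin}(v_1) = [0,0]$, $\mathrm{bin}(v_2) = [0,1]$, $\mathrm{bin}(v_3) = [1,0]$, $\mathrm{bin}(v_4) = [1,1]$, and $\vV^{(h)}_t = 0$ when $x_t$ is a key.

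Next I would analyze the two SSMs in closed form. For the shift SSM, a direct unfolding with $\vB = e_1$, $\vC = [0,1]$, and the $2 \times 2$ shift matrix gives $\overline{\vK}^{(h)}_s = \vK^{(h)}_{s-1}$, so $\overline{\vK}^{(h)}_s \neq 0$ exactly when $x_{s-1} = k_h$; since keys live only at odd positions of $\Lambda$, this further forces $s$ to be even. For the diagonal SSM with $\vA = I_2$, $\vB = \vC = e_1$, the recursion collapses to a componentwise prefix sum, so $\vK\vV^{(h)}_t = \sum_{s \leq t} \overline{\vK}^{(h)}_s (\vV^{(h)}_s)^\top$. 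Combining the two observations, the only contributing indices are even $s$ with $x_{s-1} = k_h$, and at any such $s$ the definition of $\Lambda$ forces $x_s = f_x(k_h)$. Hence $\vK\vV^{(h)}_{N+1} = c_h \cdot [1,1]^\top \mathrm{bin}(f_x(k_h))$, where $c_h \geq 1$ counts the occurrences of $k_h$ among the odd positions, using the guarantee that $x_{N+1}$ has already appeared as a key earlier in the sequence.

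Multiplying on the left by $\vQ^{(h)}_{N+1}$ then yields $\vO^{(h)}_{N+1} = 2 c_h \cdot \mathrm{bin}(f_x(k_h))$ when $h = i$, and $0$ otherwise, which is the per-head claim; the promised $\vW_O$ decodes this to the embedding of $f_x(k_i)$. The main obstacle, beyond careful index bookkeeping, is the degenerate value $v_1$ whose binary encoding is $[0,0]$: even the correct head then produces the zero vector, so the decoder must send an all-zero head output to the embedding of $v_1$. This point (together with the innocuous positive scalar $2 c_h$, which does not affect the softmax argmax) is absorbed into the decoder-existence hypothesis that the statement already defers to the reader.
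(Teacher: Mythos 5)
Your proposal is correct and follows essentially the same route as the paper's proof sketch: the $\vQ$ projection zeroes out all heads except the one assigned to $x_{N+1}=k_i$, the shift SSM delays $\vK$ by one step so the diagonal SSM (a cumulative sum) only ever accumulates outer products taken at positions immediately following an occurrence of $k_i$, which by the definition of $\Lambda$ carry $\mathrm{bin}(f_x(k_i))$. Your treatment is in fact slightly more careful than the paper's, since you make the multiplicity factor $2c_h$ explicit and flag the degenerate all-zero encoding of $v_1$, both of which the paper silently folds into the assumed existence of $\vW_O$.
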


\begin{proof}
Proof sketch.
WLOG, let $x_{N+1} = k_i$.
Then $\vQ^{(i)} = [1, 1]$, but $\vQ^{(j)} = [0, 0]$ for $j \neq i$.
Thus, $\vO^{(j)} = [0, 0]$ for $j \neq i$ due to the multiplicative interaction.

Since $\vQ^{(i)} = [1, 1]$, $\vO^{(i)}$ is the output of the diag SSMs in the \hthree head corresponding to $k_i$ (recall that each head has two independent shift SSMs and two independent diag SSMs).
The output of the diag SSMs are the cumulative sum of all the inputs they have seen in the sequence.

For one of the diag SSMs to see a non-zero input, its preceding shift SSM must have a non-zero output.
The only times $t$ this can happen in the sequence are when $x_{t-1} = k_i$.
But then $x_t = f_x(k_i)$.
Thus, the input to the diag SSMs are precisely the binary encoding of $f_x(k_i)$.
Then the output $\vO^{(i)}$ is a multiple of the binary encoding of $f_x(k_i)$, $\vW_O$ decodes this output to the embedding of $f_x(k_i)$.
\end{proof}

\subsection{Attention Expressivity}
\label{sec:app_attention_expressivity}
We provide an informal sketch of a two-layer attention model that can solve the associative recall task, inspired by the construction of~\citep{olsson2022context}.
The first layer of the attention model outputs the embedding of the previous token in the sequence, and concatenates it with the current token in the sequence.
The second layer compares the current token to the previous token embeddings, and outputs the paired embedding when there is a match---which is exactly the key-value lookup.

The construction proceeds as follows:
\begin{itemize}
  \item In the first layer, let $Q_i$ be mapped to the positional embedding of token $x_{i-1}$ (e.g., $p_{i-1}$ if $p_i$ denotes the positional embedding of token $x_i$), and $K_i$ be mapped to the positional embedding of token $x_i$.
  \item The attention matrix $A$ is computed as $QK^T$, with a causal mask (i.e., $A_{i, j} = 0$ if $j > i$).
  \item Then, $softmax(A)$ approximates the shift matrix (see Section~\ref{sec:method}).
  \item Let $V_i$ be an encoding of token $x_i$, constrained to the first half of the hidden dimension.
  \item Then, for output $O = softmax(QK^T)V$, the first half of the vector $O_i$ is the encoding of token $x_{i-1}$.
  \item In the second layer, assume that you have a skip connection, that maps the encoding of the input token $x_i$ to the second half of the vector $O_i$.
  \item Then, the input to the second layer encodes both $x_{i-1}$ and $x_i$.
  \item In the second layer, let $Q_i$ extract the encoding of $x_i$, and let $K_i$ extract the encoding of $x_{i-1}$.
  \item Apply a causal mask on $QK^T$. Then, the value of $softmax(QK^T)_{i, j}$ is large if $x_i = x_{j-1}$, and $i > j - 1$.
  \item Let $V_i$ extract the encoding of $x_i$.
  \item Then, output $O_i$ is the sum of values $x_j$ such as $x_{j-1} = x_i$. But then $O_i$ is exactly a lookup of the token that came after $x_i$ when it appeared previously in the sequence---which exactly solves associative recall.
\end{itemize}

We note that the above construction requires the ability for the positional encodings to select the previous token based on the dot product and softmax, and for token comparisons through the dot product and softmax.

\subsection{\hthree Complexity}
\label{sec:app_h3_complexity}

We prove~\cref{thm:h3_complexity}, which states that the \hthree layer takes
$O(d^2N + d N \log N)$ time and $O(dN)$ space for sequence length $N$ and hidden
dimension $d$.

\begin{proof}
  We first analyze the time complexity.
  Consider the matrix multiplies in \hthree, where the input $u \in \mathbb{R}^{N \times d}$ is
  multiplied by three weight matrices of size $d \times d$. These take time $O(d^2N)$.
  The output $\vO$ is also multiplied with an output projection weight matrix of
  size $d \times d$, also taking time $O(d^2N)$.
  Therefore the matrix multiplies take time $O(d^2N)$.

  Now consider the two SSMs in \hthree. The first SSM involves a convolution of
  $\vK \in \mathbb{R}^{N \times d}$ (in the $N$-dimension) with a kernel of size $N \times d$.
  This reduces to an FFT, a pointwise multiply, and an inverse FFT (in the
  $N$-dimension).
  This takes time $O(d N \log N)$.
  The second SSM involves $H$ convolutions, inputs of size $N \times d_h \times d_h$,
  along the $N$-dimension.
  This takes time:
  \begin{equation*}
    O(H d_h^2 N \log N) = O(d d_h N \log N) = O(d N \log N),
  \end{equation*}
  where we use the fact that $d_h = d / H$ and that $d_h = O(1)$.
  Therefore the two SSMs take total time $O(d N \log N)$.
  As a result, the \hthree layer takes time:
  \begin{equation*}
    O(d^2N + d N \log N).
  \end{equation*}

  Now we analyze the space complexity.
  The matrix multiplies all take space $O(dN)$.
  The FFTs, pointwise multiplies, and inverse FFTs of the two SSMs takes $O(dN)$
  space and $O(H d_h^2N) = O(dd_hN) = O(dN)$ space.
  Therefore the overall space complexity is $O(dN)$.

\end{proof}

\subsection{State Passing Correctness}
\label{sec:app_statepassing_correctness_proof}

We prove~\cref{thm:statepassing_correctness}.
We assume that the \textsc{BlockFFTConv} algorithm is correct, i.e., the output $y = $\textsc{BlockFFTConv}$(f, u)$ is equal to the output of an SSM with convolution kernel $f$ and input $u$.

\begin{proof}  
Proof by induction on $C$.

\paragraph{Base case:} $C = 1$.
WTS $y = [y^{(1)}]$, $\vM_{xx}x_{N'}^{(0)} + \vM_{ux}u^{(1)} = x_{N}$.

In this case, note that $N = N'$.
Then $y^{(1)} = \vM_{xy}x_{N'}^{(0)} + $\textsc{BlockFFTConv}$(f, u_1) = $\textsc{BlockFFTConv}$(f, u_1)$.
But $u = u_1$, so $y = y^{(1)} = [y^{(1)}]$.

Additionally, by the recursive definition of a state space,
\begin{align*}
 x_{N} &= \vA^{N-1}x_0 + \sum_{i=1}^N \vA^{N-i}\vB u_i \\
 &= \vA^{N'-1}x_0 + \sum_{i=1}^{N'} \vA^{N'-i}\vB u^{(1)}_i \\
 &= \vM_{xy}x_{N'}^{(0)} + [\vA^{N'-1}\vB, \vA^{N'-2}\vB, \dots, \vB]u^{(1)}. \\
 &= \vM_{xy}x_{N'}^{(0)} + \vM_{ux}u^{(1)}.
\end{align*}

\paragraph{Inductive step:} $C > 1$.
Assume that $[y^{(1)}, \dots, y^{(C-1)}] = y[:N'(C-1)]$, and $x_{N'}^{(C-1)} = x_{(C-1)N'}$.
WTS that $y^{(C)} = y[N'(C-1):N'C]$, and $\vM_{xx}x_{N'}^{(C-1)} + \vM_{ux}u^{(C)} = x_{N}$.
Let $t$ denote $N'(C-1)$.

For $i > (C-1)N'$, we have:
\begin{align*}
y_i &= \vC\vA^{i-t}\vB x_{t} + (f \ast [u_{t}, u_{t+1}, \dots, u_{t+N'-1}])_{i-t} + \vD u_i \\
&= \vC\vA^{i-t}\vB x_{t} + (f \ast u^{(C)})_{i-t} + \vD u_i \\
&= \vC\vA^{i-t}\vB x_{t} + \textsc{BlockFFTConv}(f, u^{(C)})_{i-N'} \\
&= (\vM_{xy}x_t + \textsc{BlockFFTConv}(f, u^{(C)}))_{i-N'} \\
&= (\vM_{xy}x_{N'}^{(C-1)} + \textsc{BlockFFTConv}(f, u^{(C)}))_{i-N'} \\
&= y^{(C)}_{i-N'}.
\end{align*}
Thus, $y^{(C)} = y[N'(C-1):N'C]$.

Similarly,
\begin{align*}
x_N &= \vA^{N'-1}x_{(C-1)N'} + \sum_{i=1}^{N'} \vA^{N'-i}\vB u_{i+t} \\
&= \vA^{N'-1}x_{N'}^{(C-1)} + \sum_{i=1}^{N'} \vA^{N'-i}\vB u^{(C)}_i \\
&= \vM_{xx}x_{N'}^{(C-1)} + [\vA^{N'-1}\vB, \vA^{N'-2}\vB, \dots, \vB]u^{(C)} \\
&= \vM_{xx}x_{N'}^{(C-1)} + \vM_{ux}u^{(C)}.
\end{align*}
\end{proof}

\section{Experimental Details \label{sec:app_exp_details}}

\subsection{Synthetics}
Our synthetic tasks, inspired by~\citep{olsson2022context}, are designed to mimic the in-context learning capability of large language models---the ability to learn from examples in the input sequence, and use information from the input to generate the right answer for the output.
For example, the induction head task requires memorizing the token that appears after the special $\vdash$ token in the input sequence, and the associative recall task requires learning the mapping from keys to tokens from the input sequence.

We evaluate synthetics by training two-layer versions of our GPT models, with different modules replacing attention.
We train models with inner dimension 32, and MLP dimension 128.
For all the synthetics, we use a learning rate of 5e-4 and a weight decay of 0.1.
We sample 5000 training examples and 500 test examples from the same distribution, and we train for 200 epochs.
Again, we use embedding dropout of 0.1 and residual dropout of 0.0.

\subsection{Model Architecture}
For our 125M models, we use 12 layers, with hidden dimension 1024, and MLP dimension 4096.
For our 355M models, we use 24 layers, with the same hidden dimension and MLP dimension.
The 1.3B models have 24 layers, with hidden dimension 2048, and MLP dimension 8192.
The 2.7B models have 32 layers, hidden dimension 2560, and MLP dimension 10240.
The hybrid models have 12, 16, 16, and 20 heads for the 125M, 355M, 1.3B, and 2.7B models, respectively.
The 125M hybrid model has an attention layers at layers 1 and 7, the 355M and 1.3B hybrid models have attention layers at layers 1 and 13, and the 2.7B hybrid model has attention layers at layers 10 and 21.
For both our hybrid models and our \hthree models, we use SSM state size 64.
Our hybrid model uses head dimension 1 for \hthree, while our pure \hthree model uses head dimension 8.
We run models with mixed-precision training, with bf16 for the MLP's and attention.
When training language models, we use fp32 for the FFTConv.

\subsection{OpenWebText Training}
For the 125M models trained on OpenWebText, we follow the training recipe of the Megatron-LM repo.

We use an effective batch size of 512, and use gradient accumulation to fit into
available GPU memory.
We use the AdamW optimizer, with learning rate 6e-4 for GPT-2 small and 1.5e-4
for GPT-2 medium, and weight decay of 0.1.
All models are trained with the same hyperparameters for 100K steps.
We run all implementations with mixed-precision training (PyTorch AMP).
We train models with sequence length 1024.

We use the Openwebtext dataset, with the GPT-2 BPE tokenizer. We randomly select
0.5\% of the dataset as the validation set, with the rest being used as training
set.
This random selection of validation set is done once, and all models are evaluated
on the same validation set.

\subsection{The Pile Training}
For the 125M and 355M models trained on the Pile, we follow the training recipe of GPT-3.
We use batch size 256, with sequence length 2048.
We train our models for 800K steps.
We use residual dropout 0.0 and embedding dropout 0.1.
We use the AdamW optimizer, with learning rate 6e-4 for the 125M model and 3e-4 for the 355M model, and a weight decay of 0.1.
We use a cosine schedule with 8000 steps for linear warmup, and decay the
learning rate to 10\% by 300B tokens, then continue training at 10\% learning
rate for another 100B tokens.
We suspect that there exist better hyperparameters for \hthree language models, but we did not have the resources to tune them.

For the 1.3B models, we double the batch size to 512 (with sequence length
2048), again following the training recipe of GPT-3. The number of training
steps are halved so that we train on the same number of tokens.

For the Pile dataset, we again use the GPT-2 BPE tokenizer, similar to GPT-3 and GPT-Neo.

\subsection{SuperGLUE}
We follow the prompts used in the GPT-3 paper~\citep{brown2020language}.
For rank classification on the binary classification tasks, we use yes/no for WSC, WIC, MultiRC, and BoolQ, and we use true/false for RTE.
For CB, we use true/false/neither as the three choices.
For COPA and ReCoRD, we use the continuations provided by the task.

\subsection{Hardware}
All models were trained on either a single 16xA100-40GB node or a cluster of 8xA100-80GB nodes.

\section{Additional Experiments}
\label{sec:app_additional_experiments}

\subsection{LRA Accuracy}
We evaluate the accuracy of \hthree on LRA.
We compare accuracy to S4D~\citep{gu2022parameterization}, since \hthree uses an S4D kernel as a component in its layer.
We use the same hyperparameters as S4D, and make the layer bidirectional by making two copies and running them in opposite directions.

\begin{table}[ht]
    \small
    \centering
    \caption{\label{table:lra_acc} LRA performance of \hthree compared to S4D. }
    {
        \begin{tabular}{@{}|c|cccccc|c|@{}}
            \hline
        Model & ListOps & Text & Retrieval & Image & Pathfinder & Path-X & Avg  \\ 
        \hline
        S4D~\citep{gu2022parameterization} & \textbf{58.3} & 87.3 & 90.7 & \textbf{87.5} & \textbf{93.6} & \textbf{92.3} & \textbf{85.0} \\
        \hthree & 57.5 & \textbf{88.2} & \textbf{91.0} & 87.3 & 93.0 & 91.8 & 84.8  \\ \hline
        \end{tabular}
    }
\end{table}

Table~\ref{table:lra_acc} shows that \hthree performs well on the LRA benchmark, even thought it was designed for autoregressive language modeling.
\hthree outperforms S4D on two of the LRA tasks, and comes within 1 point on the others.

\subsection{WikiText103}
We train 125M-sized models on WikiText103~\citep{merity2016pointer} and compare their test PPL to transformers, as well as other variants of efficient or long-range attention.
We use the same hyperparameters and setup as training on OpenWebText.
We also provide results from Transformer-XL and Perceiver AR for context, though the results may not be directly comparable due to differences in model size and tokenizer.

\begin{table}[h]
\caption{\label{table:wikitext103} Test PPL on WikiText103.}
\centering
\small
\begin{tabular}{|c|c|}
\hline
Models &  PPL \\
\hline
Transformer (125M) & 18.6  \\
Hybrid \hthree (125M) & 18.5 \\
Performer (125M)~\citep{choromanski2020rethinking} & 26.8 \\
Reformer (125M)~\citep{kitaev2020reformer} & 26.0 \\
Linear Attention (125M)~\citep{katharopoulos2020transformers} & 25.6 \\ \hline
Perceiver AR (358M)~\citep{hawthorne2022general} & 18.4 \\
Transformer-XL (285M)~\citep{dai2019transformer} & 18.4 \\ \hline
\end{tabular}
\end{table}

Table~\ref{table:wikitext103} shows that the Hybrid \hthree model is competitive with Transformers of the same size, as well as larger models such as the 358M Perceiver AR and 285M Transformer-XL.
The hybrid \hthree model also significantly outperforms transformers with performer, reformer, and linear attention.

We note that the Transformer-XL and Perceiver AR PPl numbers are from the original papers, and may not be directly comparable to our results.
In particular, they use a tokenizer with a different vocab size, which means that the PPLs are not directly comparable.
In addition, the larger vocab size necessitates a change in the model (adaptive softmax) that may affect performance.
The top five numbers in Table~\ref{table:wikitext103} are trained with the same setup and are directly comparable to each other.

\subsection{PG-19}
We evaluate models trained on the PG-19 dataset~\citep{rae2019compressive}, a natural language dataset comprised of texts from books.
We compare the performance of Hybrid \hthree compared against Transformers and linear attention.
We use the same setup as evaluating on OpenWebText.

\begin{table}[h]
\caption{\label{table:pg19} Test PPL on PG-19.}
\centering
\small
\begin{tabular}{|c|c|}
\hline
Models &  PPL \\
\hline
Transformer (125M) & 17.0  \\
Hybrid \hthree (125M) & 16.2 \\
Linear Attention (125M)~\citep{katharopoulos2020transformers} & 19.1 \\ \hline
\end{tabular}
\end{table}

Table~\ref{table:pg19} shows that Hybrid \hthree outperforms transformers and linear attention.

\subsection{Length Extrapolation}
One property of SSMs is that they can naturally extrapolate to sequence lengths longer than those seen during training.
We use the synthetic associative recall task to demonstrate that \hthree maintains this capability.
To do so, we train a two-layer \hthree model on sequences of length 20 drawn from the associative recall synthetic language.
Then, we evaluate accuracy of the last token prediction on sequences of length 20 and 40.

\begin{table}[h]
\caption{\label{table:length_extrapolation} Accuracy of an \hthree model trained for associative recall on sequences of length 20, evaluated on sequences of length 20 and 40.}
\centering
\small
\begin{tabular}{|c|cc|}
\hline
Models & Acc, seqlen 20 & Acc, seqlen 40 \\
\hline
\hthree & 99.8 & 98.4 \\ \hline
\end{tabular}
\end{table}

Table~\ref{table:length_extrapolation} shows that \hthree maintains accuracy on sequences of length 40, which is twice the length of the training sequences.

\subsection{Scaling in Number of Tokens}
We evaluate how well a Hybrid \hthree model scales with the number of tokens seen during training, compared to a Transformer.
For these experiments, we train a 125M Hybrid \hthree model and a 125M Transformer on the Pile for 5B, 10B, and 15B tokens.
We use a learning rate of 6e-4, adjusting the warmup to be 1\% of the total training time, and adjusting the decay rate to decay the learning rate to 6e-5 by the end of training.

\begin{table}[h]
\caption{\label{table:scaling_tokens} Test PPL on the Pile for models trained with fewer tokens.}
\centering
\small
\begin{tabular}{|c|cc|}
\hline
Train Tokens & Hybrid \hthree (125M) & Transformer (125M) \\
\hline
5B & 11.8 & 12.7 \\
10B & 10.7 & 11.3 \\
15B & 10.2 & 10.7 \\
\hline
\end{tabular}
\end{table}

Table~\ref{table:scaling_tokens} shows the results.
Both the Hybrid \hthree model and Transformer model improve as the number of training tokens increases.

\subsection{\hthree Language Model}
\begin{table}[ht]
    \small
    \centering
    \caption{\label{table:superglue_zeroshot_all} Zero-shot performance on SuperGLUE with rank classification. Best results for each model size in bold. }
    {
        \begin{tabular}{@{}|c|cccccccc|c|@{}}
            \hline
        Model & WSC & WIC & RTE & CB & MultiRC & ReCoRD & BoolQ & COPA & Average \\ 
        \hline
        OPT-125M & \underline{39.4} & \underline{52.0} & 48.7 & 37.4 & \underline{58.9} & \underline{44.9} & \underline{59.6} & \underline{60.0} & 50.1 \\
        GPT-Neo-125M & 36.5 & \textbf{53.6} & \underline{53.1} & \underline{41.1} & \textbf{59.9} & 39.6 & \textbf{62.2} & \underline{60.0} & \underline{50.8} \\
        \textbf{\hthree-125M} & \textbf{61.5} & 50.0 & \underline{53.1} & \underline{41.1} & 4.6 & 15.8 & 46.4 & 51.0 & 40.4 \\
        \textbf{Hybrid \hthree-125M} & \underline{39.4} & 51.4 & \textbf{59.2} & \textbf{48.2} & 51.4 & \textbf{55.0} & \underline{59.6} & \textbf{67.0} & \textbf{53.9} \\ \hline 
        \end{tabular}
    }
\end{table}
\begin{table}[h]
    \small
    \centering
    \caption{\label{table:superglue_fewshot_all} 3-shot performance on SuperGLUE with rank classification. Best results for each size in bold, second best underline. }
    {
        \begin{tabular}{@{}|c|cccccccc|c|@{}}
            \hline
        Model & WSC & WIC & RTE & CB & MultiRC & ReCoRD & BoolQ & COPA & Average \\ 
        \hline
        OPT-125M & 36.5 & \textbf{50.2} & 47.3 & 44.6 & \textbf{57.9} & \underline{44.9} & 41.9 & \underline{60.0} & \underline{47.9} \\
        GPT-Neo-125M & 38.5 & \underline{50.0} & \underline{53.1} & 17.9 & \underline{56.3} & 39.6 & \textbf{62.1} & \underline{60.0} & 47.2 \\
        \textbf{\hthree-125M} & \textbf{63.5} & \underline{50.0} & 52.3 & \underline{48.2} & 32.6 & 15.8 & 37.8 & 51.0 & 43.9 \\ 
        \textbf{Hybrid \hthree-125M} & \underline{43.3} & 49.1 & \textbf{58.1} & \textbf{51.8} & 48.9 & \textbf{55.0} & \underline{56.1} & \textbf{67.0} & \textbf{53.7} \\ \hline 
        \end{tabular}
    }
\end{table}

We report the results of a pure \hthree language model on NLP evaluations.
We train a 125M model on the Pile for 400B tokens.
Tables~\ref{table:superglue_zeroshot_all} and~\ref{table:superglue_fewshot_all} show zero-shot and few-shot performance on SuperGLUE, respectively.

\subsection{Generation Performance\label{sec:app_generation}}
\begin{table}[h]
    \scriptsize
    \centering
    \caption{\label{table:superglue_zeroshot} Zero-shot performance on SuperGLUE. Best results for each size in bold, second best underline. }
    {
        \begin{tabular}{@{}|c|cccccccc|c|@{}}
            \hline
        Model & WSC & WIC & RTE & CB & MultiRC & ReCoRD & BoolQ & COPA & Average \\ 
        \hline
        OPT-125M & \textbf{36.5} & \textbf{48.4} & \textbf{49.8} & \textbf{8.9} & \textbf{39.1} & \underline{44.9} & 45.9 & \underline{60.0} & \textbf{41.7} \\
        GPT-Neo-125M & \underline{27.9} & \underline{11.3} & 45.8 & \textbf{8.9} & \underline{19.1} & 39.6 & \textbf{56.4} & \underline{60.0} & \underline{33.6} \\
        \textbf{Hybrid \hthree-125M} & 0.0 & 0.0 & \underline{47.3} & \textbf{8.9} & 4.4 & \textbf{55.0} & \underline{47.6} & \textbf{67.0} & 28.8 \\ \hline 
        GPT-2 medium (355M) & \textbf{50.0} & \textbf{50.2} & 16.2 & \textbf{21.4} & 10.5 & \underline{53.3} & 38.4 & \underline{65.0} & \underline{38.1} \\
        OPT-350M & \underline{41.3} & \underline{34.8} & \textbf{49.5} & \underline{16.1} & \textbf{23.6} & 51.4 & \underline{39.7} & 60.0 & \textbf{39.6} \\
        \textbf{Hybrid \hthree-355M} & 22.1 & 21.5 & \underline{47.3} & 8.9 & \underline{17.1} & \textbf{62.3} & \textbf{44.4} & \textbf{69.0} & 36.6 \\ \hline
        \end{tabular}
    }
\end{table}
\begin{table}[h]
    \scriptsize
    \centering
    \caption{\label{table:superglue_fewshot} 3-shot performance on SuperGLUE with generation. Best results for each size in bold, second best underline. }
    {
        \begin{tabular}{@{}|c|cccccccc|c|@{}}
            \hline
        Model & WSC & WIC & RTE & CB & MultiRC & ReCoRD & BoolQ & COPA & Average \\ 
        \hline
        OPT-125M & 36.5 & \underline{49.1} & 47.3 & 33.9 & \underline{35.5} & \underline{44.8} & 38.5 & 60.0 & 43.2 \\
        GPT-Neo-125M & \underline{38.5} & \textbf{50.0} & \underline{53.1} & \textbf{42.9} & 22.5 & 39.7 & \textbf{61.2} & \textbf{68.0} & \underline{47.0} \\
        \textbf{\hthree-125M} & 0.0 & 0.0 & 47.3 & 8.9 & 0.0 & 15.4 & 37.8 & 53.0 & 20.3 \\ 
        \textbf{Hybrid \hthree-125M} & \textbf{43.3} & \underline{49.1} & \textbf{58.1} & \underline{41.1} & \textbf{40.3} & \textbf{55.2} & \underline{49.5} & \underline{67.0} & \textbf{50.5} \\ \hline 
        GPT-2 medium (355M) & 36.5 & \textbf{50.5} & 47.3 & 28.6 & 35.3 & \underline{53.1} & \underline{37.8} & \underline{63.0} & 44.0 \\
        OPT-350M & \underline{37.5} & \underline{50.0} & \underline{46.2} & \textbf{41.1} & \underline{40.6} & 51.3 & 39.4 & 59.0 & \underline{45.6} \\
        \textbf{Hybrid \hthree-355M} & \textbf{42.3} & 47.5 & \textbf{50.5} & \underline{37.5} & \textbf{57.5} & \textbf{61.4} & \textbf{45.4} & \textbf{73.0} & \textbf{51.9} \\ \hline
        \end{tabular}
    }
\end{table}
We report results on SuperGLUE for generation.
Instead of taking rank classification, we instead let the model generate a response, and we search for the gold label (i.e., ``yes'' or ``no'' for the yes/no questions) in the output.
Tables~\ref{table:superglue_zeroshot} and~\ref{table:superglue_fewshot} report the results.
The trends for few-shot learning match with the logit results, but the hybrid and \hthree models perform very poorly in zero-shot performance on some tasks.
In these cases, the models tend to generate long text responses that are not relevant to the answer.
The few-shot learning examples help the models generate answers in a parsable format.

\subsection{Non-Text Sequence Modeling}
\label{subsec:non_text_sequence_modeling}

We show that \hthree outperforms Transformers on two non-text sequence modeling tasks: raw speech classification and seizure classification over raw EEG signals.
\hthree sets state-of-the-art performance on seizure classification and matches S4 on speech classification---which suggests that \hthree, or one of its hybrids, may be a strong candidate for a multimodal foundation model.
Appendix~\ref{sec:app_exp_details} gives experimental details, and Appendix~\ref{sec:app_additional_experiments} gives an additional experiment on brain fMRI data.

\paragraph{Seizure Classification from EEG}
Seizures, which are characterized by uncontrolled brain activity, are some of the most common neurological disorders~\citep{fisher2014ilae}. Chronic seizures, or epilepsy, cause a range of psychiatric and psycho-social disorders and impact the lives of roughly one percent of the global population~\citep{kerr2012impact}. The first step to treating epilepsy is manual analysis of scalp EEG by board-certified neurologists. However, the vast amount of EEG data produced by each patient (which can be up to days of data) makes manual EEG analysis a costly and time-consuming process.  

To mitigate the costs associated with EEG monitoring, recent deep learning techniques have began to show promise in flagging abnormal EEG segments for potential seizure events~\citep{siddiqui2020review}. A challenge with classifying EEG data is the trade-off between increasing input sequence length, where more context has been shown to improve seizure classification performance \cite{saab2020weak}, with the increased difficulty of training deep learning models on long sequences (e.g., an EEG signal sampled at $200$Hz produces $12{,}000$ time steps per minute). As a result, many techniques involve domain-specialized models and pre-processing steps, such as FFT transforms and graphical representations \cite{tang2021self}.

We use the largest publicly available EEG corpus, TUSZ v1.5.2~\citep{shah2018temple}, which includes $5{,}612$ EEG signals from 636 patients, with $3{,}050$ annotated seizures.
Signals are segmented into 60-second clips, and split into train/val/test by patient.
The train set contains 39765 clips, the val set contains 4351 clips, and the test set contains 10001 clips.

\begin{table}[h]
    \small
    \centering
    \caption{\label{table:eeg} Performance (AUROC) on 60s seizure classification from raw EEG (sequence length 12000).}
    {
        \begin{tabular}{@{}|cccccc|@{}}
        \hline
        H3 & Transformer  & Dense-CNN & CNN-LSTM & LSTM & 1D-CNN  \\ 
        \hline
        \textbf{83.2} & x  & 78.0 & 68.6 & 69.3 & 69.7 \\
        \hline
        \end{tabular}
    }
\end{table}

We evaluate binary seizure classification of $60$-sec EEG clips, sampled at $200$Hz, with 19 electrodes: $ x \in R^{12{,}000 \times 19}$ and $y \in \{0,1\}$ on the TUSZ v1.5.2~\citep{shah2018temple} corpus.
Transformers cannot process the long sequence length of EEG signals without running out of GPU memory, whereas \hthree can---and sets state-of-the-art performance.

\paragraph{Raw Speech Classification}
The SC10 speech commands task~\citep{warden2018speech} contains raw audio signals one second in length, sampled at 16kHz.
Similarly to EEG signals, Transformers cannot process the long sequence length.
Table~\ref{table:speech} shows that \hthree comes within half a point of S4, the state-of-the-art method.

\begin{table}[h]
    \small
    \centering
    \caption{\label{table:speech} SC 10-class classification on raw audio (sequence length 16000).}
    {
        \begin{tabular}{@{}|cccccc|@{}}
            \hline
        H3 & S4 & WaveGan-D & Transformer & Performer & CKConv  \\ 
        \hline
        97.04 & \textbf{97.50} & 96.25 & x & 30.77 & 71.66 \\ \hline
        \end{tabular}
    }
\end{table}

\paragraph{Functional Magnetic Resonance Imaging Data}

Functional Magnetic Resonance Imaging (fMRI) data are typically represented in four dimensions, indicating the measured blood-oxygen-level-dependent (BOLD) signal in temporal sequences $S = \{V_1, ..., V_t\}$ of 3-dimensional volumes $V \in \mathbb{R}^{x \times y \times z}$, each indicating the BOLD signal for all spatial locations of the brain (as defined by three spatial dimensions $x$, $y$, and $z$). A key challenge for the analysis of fMRI data is the high dimensionality and low sample size of its datasets, which typically contain many hundred thousand dimensions (i.e., voxels) for each of several hundred volumes $V$ in each of tens to hundreds of sequences $S$. In this setting, where the number of features exceed the number of samples, standard machine learning approaches are prone to overfitting.

In spite of the low sample size of individual datasets, neuroimaging research can be considered as recently entering a big data era because researchers more frequently share their collected datasets publicly~\citep{markiewicz_2021_openneuro}. The availability of these data open up the opportunity for pre-training in neuroimaging at scale, as recently demonstrated by~\citep{thomas_fmri_2022}, enabling models to utilize the knowledge that can be learned from public functional neuroimaging data for the analysis of individual datasets. Specifically,~\citep{thomas_fmri_2022} evaluate the performance of several self-supervised learning frameworks for functional neuroimaging data by first pre-training models on a broad fMRI dataset spanning $11,980$ fMRI runs from $1,726$ individuals across $34$ datasets and subsequently adapting the pre-trained models to two downstream mental state decoding datasets (namely, the HCP~\citep{van_2013_wu} and MDTB~\citep{king_2019_functional} datasets). In mental state decoding, predictive models are tasked with identifying (i.e., decoding) some set of mental states (e.g., answering questions about a prose story or math problem) from measured brain activity. The authors find that a GPT-based model, pre-trained in a causal learning framework, performs best in decoding the $20$ (HCP) and $26$ (MDTB) mental states of the two downstream datasets.

To evaluate the performance of H3 on fMRI data, we replicate this analysis, using the up- and downstream fMRI datasets that were published by~\citep{thomas_fmri_2022}, treating H3 as a drop-in replacement for the GPT model. To alleviate the high dimensionality challenge of fMRI data, and due to the generally high spatial correlation of brain activity, the original authors have reduced the volumetric time series $S$ to a set $\Theta \in {\theta_1, ..., \theta_n}$ of $n=1,024$ functionally-independent brain networks $\theta$  (as defined by the dictionaries of functional modes (DiFuMo) brain atlas~\citep{dadi_2020_fine}), each describing the BOLD signal for some subset of voxels $v_{x,y,z} \in V$, such that the resulting sequences $X \in \mathbb{R}^{t \times n}$ describe the activity pattern of each brain network $n$ for time points $t$. 

In line with~\citep{thomas_fmri_2022}, we first pre-train models $f(\cdot)$ to predict the distribution of brain activity for the next time point $j$ of an input sequence $X$, using a mean absolute error ($L_{rec}$) training objective given the model's output $\hat{X} \in \mathbb{R}^{t \times n}$: $L_{rec} = \frac{1}{n} \sum_{i=1}^{n} |X_{j,i} - \hat{X}_{j,i}|$; $\hat{X}_{t,n} = b_n + \sum_n f(E^{X})_{t,e} w_{e,n}$; $E^{X}_{t,e} = E^{TR} + E^{pos} + b_e + \sum_n X_{t,n} w_{n,e}$. Here, $E^{TR} \in \mathbb{R}^{e}$ and $E^{pos} \in \mathbb{R}^{e}$ represent learnable embeddings for each possible time point and position of an input sequence (for details, see~\citep{thomas_fmri_2022}). As the sampling frequency of fMRI is variable, the same position of an input sequence can correspond to different time points. Note that $f(\cdot)$ processes the input in a lower-dimensional embedding representation $E^{X} \in \mathbb{R}^{t \times e}$ (with $e=768$ dimensions).

We evaluate two model architectures for $f(\cdot)$, namely, the GPT variant used in ~\citep{thomas_fmri_2022}, with $4$ hidden layers and $12$ attention heads, and a corresponding H3 variant with $4$ hidden layers (with $H=64$ and $m=1$; see section \ref{sec:method}). For both models, the sequence of hidden-states outputs of the last model layer are used to determine $\hat{X}$.

Just as~\citep{thomas_fmri_2022}, we randomly divide the upstream data into distinct training and validation datasets by randomly designating $5\%$ of the fMRI runs of each of the $34$ upstream datasets as validation data (at a minimum of $2$ runs per dataset) and using the rest of the runs for training. During upstream learning, we then randomly sample sequences of $10$ to $100$ time points from the fMRI runs and train models with the ADAM optimizer (with $\beta_1=0.9$, $\beta_2=0.999$, and $\epsilon=1e^{-8}$ ) for $5,000$ steps at a mini-batch size of $512$, and a learning rate of $5e^{-4}$. We apply a linear learning rate decay schedule (with a warm-up phase of 1\% of the total number of training steps), gradient norm clipping at $1.0$, and $L2$-regularisation (weighted by $0.1$). We also apply dropout at a rate of $0.1$ for the GPT-based model (based on~\citep{thomas_fmri_2022}) and evaluate three dropout rates for H3: $0.1$, $0.2$, and $0.3$. 

\begin{figure}[h]
    \centering
    \includegraphics[width=0.65\textwidth]{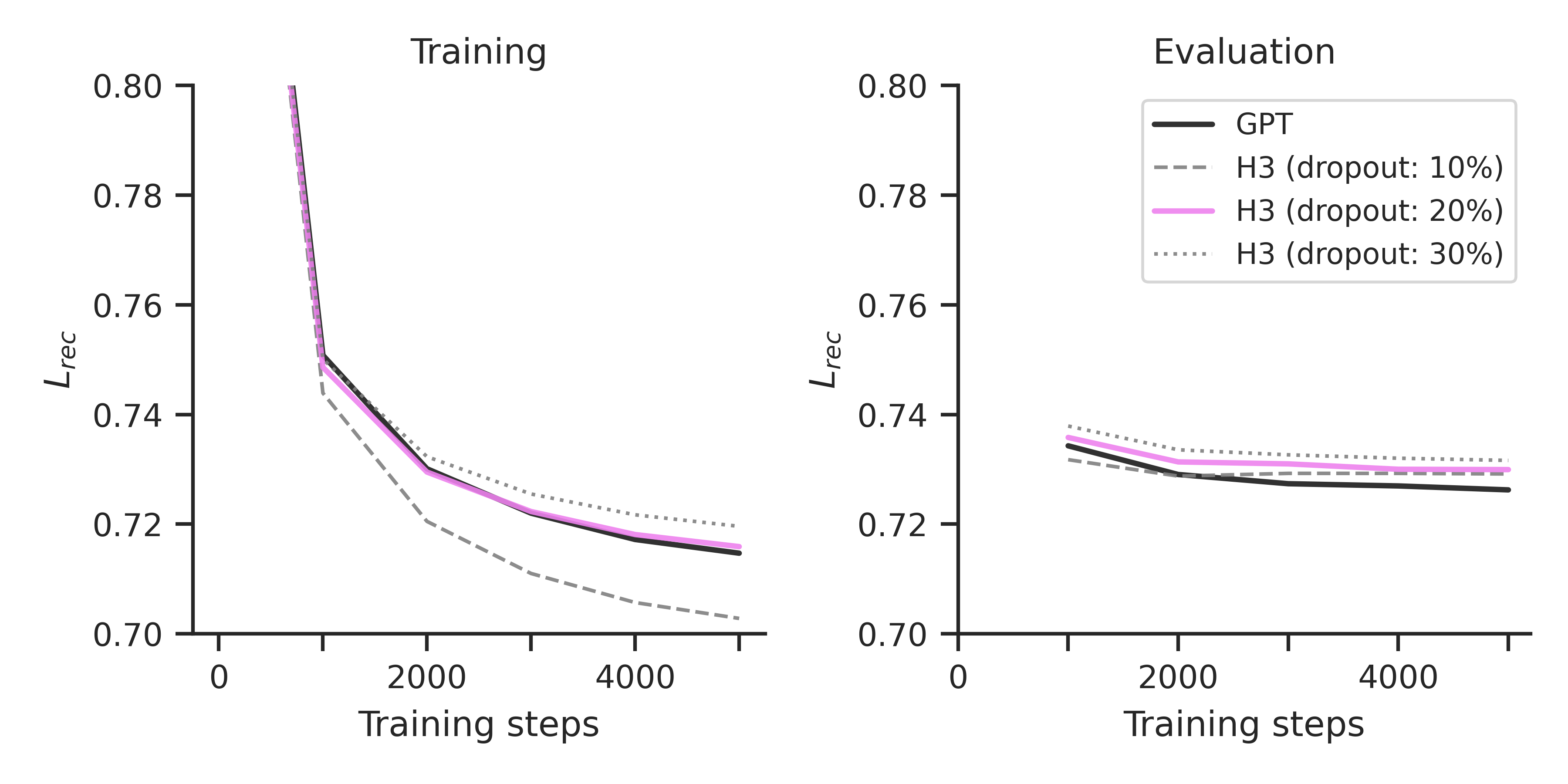}
    \caption{\label{fig:fmri-upstream} Upstream mean absolute error ($L_{rec}$) in training and evaluation datasets over the course of model training.}
\end{figure}

We find that the H3 variant trained with $0.2$ dropout performs on par with the GPT model, in terms of mean absolute error (Fig.\ \ref{fig:fmri-upstream}), and therefore continue all further analyses with this model variant. We also find that both models exhibit almost identify $L_{rec}$ error distributions throughout the brain, with relatively higher errors in the posterior parietal, occipital, and cingulate cortices as well parts of the limbic system (Fig.\ \ref{fig:fmri-upstream-brain}).

\begin{figure}[h]
    \centering
    \includegraphics[width=0.5\textwidth]{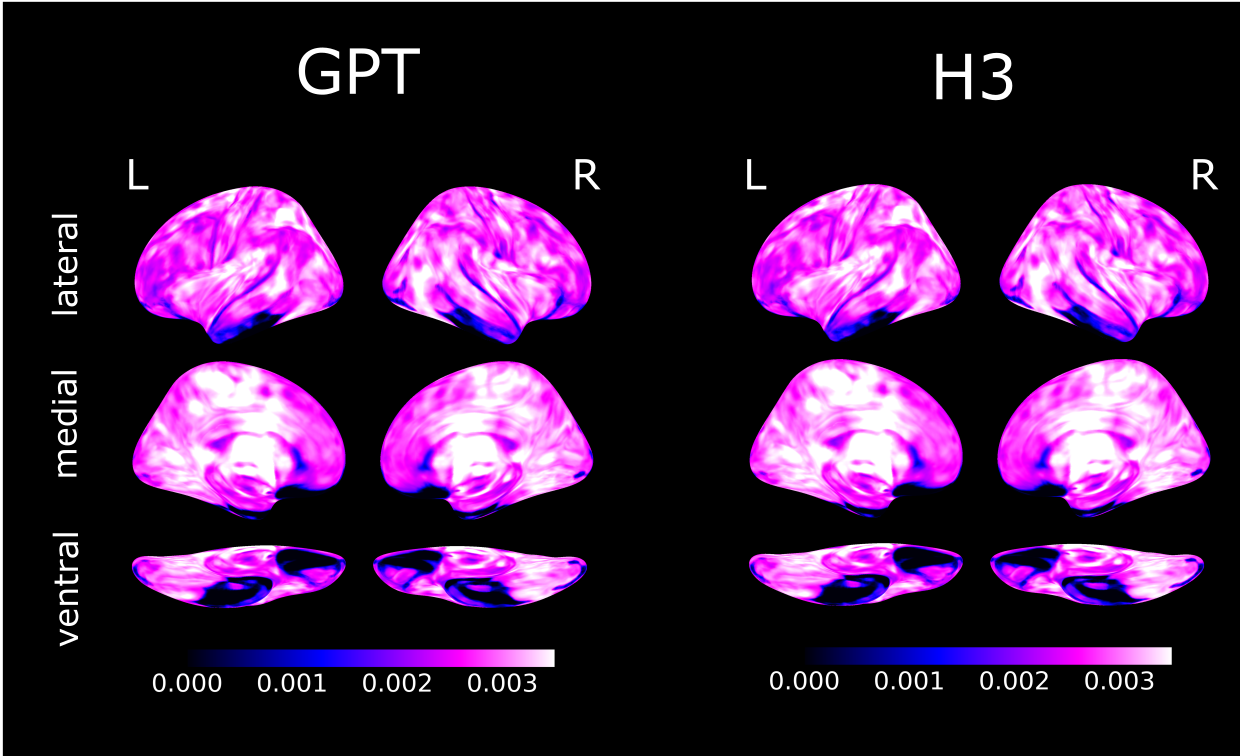}
    \caption{\label{fig:fmri-upstream-brain} Mean absolute error ($L_{rec}$) of the final pre-trained models for each voxel of the brain projected onto the inflated cortical surface of the FsAverage template~\citep{fischl_2012_freesurfer}.}
\end{figure}

To adapt the pre-trained models to mental state decoding, we add a learnable classification embedding $E^{cls} \in \mathbb{R}^{n}$ to the end of input sequences $X$ and forward the model's prediction $f(E^{X})$ to a decoding head $p(\cdot)$, composed of a dense hidden layer with $e$ model units (one for each embedding dimension, with $tanh$ activation) as well as a $softmax$ output layer with one model unit $i$ for each considered mental state in the data. Accordingly, we adapt models by optimizing a standard cross entropy loss objective: $L_{cls} = - \sum_i y_i \log \ {p(f(E^X))_i}$, where $y_i$ indicates a binary variable that is $1$ if $i$ is the correct mental state and $0$ otherwise.

During downstream adaptation, we begin training with the respective pre-trained model parameters and then allow all parameters to change freely. Similar to~\citep{thomas_fmri_2022}, we randomly split each of the two downstream datasets into distinct training and test datasets, each comprising $40$ (HCP) or $10$ (MDTB) distinct individuals. We adapt models for $750$ steps at a mini-batch size of $256$ and a learning rate of $5e^{-5}$ (otherwise using the same learning parameters as for upstream training). Importantly, we repeat each downstream training run $20$ times using different random seeds, leading to different random splits of the data and variability in other non-deterministic training factors (such as random initialization and data shuffling).

As for the upstream data, the H3 and GPT-based models generally perform on par in their mental state decoding performances in the two left-out test datasets (Table \ref{table:fmri-downstream}).

\begin{table}[h]
    \small
    \centering
    \caption{\label{table:fmri-downstream} Downstream adaptation performance of models pre-trained on fMRI data, averaged over $20$ training runs with varying random seeds. F1-scores are macro-averaged.}
    {
        \begin{tabular}{@{}|c|c|cc|@{}}
        \hline
        Dataset & Model & Acc. ($\pm 95\% \text{CI}$) & F1 ($\pm 95\% \text{CI}$)\\ 
        \hline
        HCP & GPT & $88.44$ ($\pm 0.39$) & $87.24$ ($\pm 0.39$) \\
         & H3 & $88.75$ ($\pm 0.33$) & $87.16$ ($\pm 0.37$) \\ \hline
        MDTB & GPT & $89.47$ ($\pm 0.44$) & $88.74$ ($\pm 0.54$) \\
         & H3 & $88.25$ ($\pm 0.45$) & $85.76$ ($\pm 0.61$) \\ \hline
        \end{tabular}
    }
\end{table}

\end{document}